\def\eqref#1{equation~\ref{#1}}
\def\1{\bm{1}}
\DeclareMathAlphabet{\mathsfit}{\encodingdefault}{\sfdefault}{m}{sl}
\SetMathAlphabet{\mathsfit}{bold}{\encodingdefault}{\sfdefault}{bx}{n}
\DeclareMathOperator*{\argmax}{arg\,max}
\DeclareMathOperator*{\argmin}{arg\,min}
  \theoremstyle{definition}
  \newtheorem{definition}{Definition}
  \theoremstyle{plain}
  \newtheorem{theorem}{Theorem}
  \newtheorem{lemma}{Lemma}
  \theoremstyle{remark}
\newenvironment{proofof}[1]{\noindent{\bf Proof of {#1}:~~}}{\(\qed\)}
\newcommand{\ignore}[1]{}
\newcommand{\EE}{\mathbb{E}}
\newcommand{\RR}{\mathbb{R}}
\def \cM     {{\cal M}}
\def \cX     {{\cal X}}
\newcommand{\eg}{\textit{e.g.,}\xspace}
\newcommand{\ie}{\textit{i.e.,}\xspace}  %
\def \Paren#1{{\left({#1}\right)}}
\newcommand{\eqdef}{{:=}}
\newcommand{\probof}[1]{\Pr\Paren{#1}}
\newcommand{\probofsub}[2]{\Pr_{#1}\Paren{#2}}
\def\ignore#1{}
\newcommand{\bi}{\begin{itemize}}
\newcommand{\ei}{\end{itemize}}
\def\orpro{\mathop{\mathchoice
   {\vee\kern-.49em\raise.7ex\hbox{$\cdot$}\kern.4em}
   {\vee\kern-.45em\raise.63ex\hbox{$\cdot$}\kern.2em}
   {\vee\kern-.4em\raise.3ex\hbox{$\cdot$}\kern.1em}
   {\vee\kern-.35em\raise2.2ex\hbox{$\cdot$}\kern.1em}}\limits}
\def\andpro{\mathop{\mathchoice
 {\wedge\kern-.46em\lower.69ex\hbox{$\cdot$}\kern.3em}
 {\wedge\kern-.46em\lower.58ex\hbox{$\cdot$}\kern.25em}
 {\wedge\kern-.38em\lower.5ex\hbox{$\cdot$}\kern.1em}
 {\wedge\kern-.3em\lower.5ex\hbox{$\cdot$}\kern.1em}}\limits}
\def\simge{\mathrel{%
   \rlap{\raise 0.511ex \hbox{$>$}}{\lower 0.511ex \hbox{$\sim$}}}}
\def\simle{\mathrel{
   \rlap{\raise 0.511ex \hbox{$<$}}{\lower 0.511ex \hbox{$\sim$}}}}
\newcommand{\coloredcomment}[1]{\COMMENT{\textcolor{blue}{#1}}}
\newcommand{\pres}{\p_{\rm res}}
\newcommand{\prej}{\p_{\rm rej}}
\newcommand{\premain}{\p_{\rm remain}}
\newcommand{\eos}{{\rm E.O.S}}
\newcommand{\dtv}[2]{d_{\rm TV}(#1, #2)}
\newcommand{\p}{p}
\def\withnotes{0}
\def\withcolors{0}
\def\withcolorsnew{0}
\def\ifarxiv{0}
\def\withcolorrevision{0}
\newcommand{\revise}[1]{{\textcolor{red}{#1}}}
\newcommand{\confirm}[1]{{\textcolor{black}{#1}}}
\newcommand{\revise}[1]{{{#1}}}
\newcommand{\confirm}[1]{{\textcolor{black}{#1}}}
\newcommand{\znew}[1]{{\textcolor{brown}{#1}}}
\newcommand{\znew}[1]{{\textcolor{black}{#1}}}
\newcommand{\zs}[1]{{\noindent \textit{\small\textcolor{brown}{ziteng: #1}}}}
\newcommand{\ts}[1]{{\noindent \textit{\small\textcolor{brown}{theertha: #1}}}}
\newcommand{\zs}[1]{{}}
\newcommand{\ts}[1]{{}}
\renewcommand{\ts}[1]{{}}
\newcommand{\msmall}{\cM_s}
\newcommand{\mbig}{\cM_b}
\newcommand{\nt}{\gamma}
\newcommand{\spec}{\textsc{SpecDec}}
\newcommand{\draftverify}{\textsc{Verify}}
\newcommand{\specblock}{\textsc{BlockVerify}\xspace}
\newcommand{\spectoken}{\textsc{TokenVerify}\xspace}
\newcommand{\specblockt}{block verification\xspace}
\newcommand{\spectokent}{token verification\xspace}
\newcommand{\acclength}{\beta}
\newcommand{\context}{{\bm{c}}}
\renewcommand{\a}{{\textsc{a}}}
\renewcommand{\b}{{\textsc{b}}}
\newcommand{\ber}{{\rm Ber}}
\newcommand{\diff}[1]{{\colorbox{blue!30}{#1}}}
\newcommand{\pab}{\p}
\newcommand{\tb}{h^{\rm block}}
\renewcommand{\tt}{h^{\rm token}}
\newcommand{\simp}{\sim_{\rm p}}
\title{Block Verification Accelerates Speculative Decoding}
\author{Ziteng Sun\thanks{All emails \texttt{@google.com}.} \\
Google Research\\
\texttt{zitengsun@} \\
\And
Uri Mendlovic\footnotemark[1]  \\
Google Research\\
\texttt{urimend@} \\
\And
Yaniv Leviathan\footnotemark[1]  \\
Google Research\\
\texttt{leviathan@} \\
\AND
Asaf Aharoni\footnotemark[1]  \\
Google Research\\
\texttt{asafaharoni@} \\
\And
Jae Hun Ro\footnotemark[1]  \\
Google Research\\
\texttt{jaero@} \\
\And
Ahmad Beirami\footnotemark[1]  \\
Google Research\\
\texttt{beirami@} \\
\And
Ananda Theertha Suresh\footnotemark[1]  \\
Google Research\\
\texttt{theertha@} \\
}
\begin{document}

\maketitle
\begin{abstract}
Speculative decoding is an  effective method for lossless acceleration of large language models during inference. It uses a fast model to draft a block of tokens which are then verified in parallel by the target model, and provides a guarantee that the output is distributed identically to a sample from the target model. In prior works, draft verification is performed independently token-by-token. Surprisingly, we show that this approach is not optimal. We propose \emph{Block Verification}, a simple draft verification algorithm that verifies the entire block jointly and provides additional wall-clock speedup. We prove that the proposed mechanism is optimal in the expected number of tokens produced each iteration and specifically is never worse than the standard token-level verification.
Empirically, block verification provides modest but consistent wall-clock speedups over the standard token verification algorithm of 5\%-8\% in a range of tasks and datasets.
Given that block verification does not increase code complexity, maintains the strong lossless guarantee of the standard speculative decoding verification algorithm, cannot deteriorate performance, and, in fact, consistently improves it, it can be used as a good default in speculative decoding implementations.
\end{abstract}

\section{Introduction}
Large language models (LLMs) \citep{chowdhery2022palm, touvron2023llama, openai2023gpt4, team2023gemini} are often decoded through autoregressive sampling, where 
generating $k$ tokens requires $k$ costly serial evaluations of the model.
To improve generation latency, \citet{leviathan2022fast} proposed \emph{speculative decoding}, which enables an LLM to generate several tokens concurrently.
In each iteration, conditioned on the current decoded prefix, a guess of the next block of $\nt$ tokens is made by a fast drafter (\eg a small model or a heuristic).
Each of the resulting $\nt+1$ prefixes are then evaluated by the large target model in parallel. To guarantee that the final output follows the same distribution as that of the large model, some of the generated tokens are accepted while others are rejected.
The accepted tokens\footnote{With an extra token sampled from either a residual distribution or the large model distribution.} are then appended to the prefix, and the process repeats until generation ends.
See \cref{fig:speculative-decoding-diagram} and \cref{alg:speculative_decoding_framework}.

\begin{figure*}[h]
\centering
\includegraphics[width=0.7\textwidth]{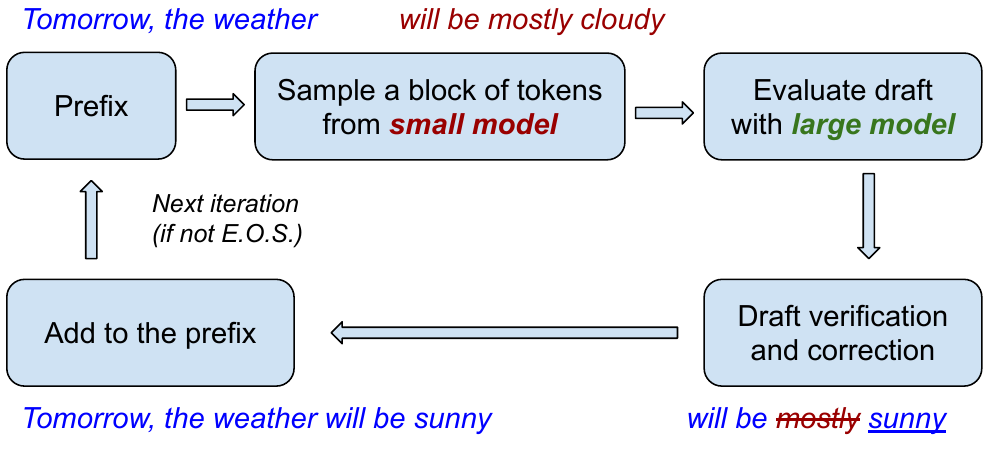}
\caption{One iteration of speculative decoding (\cref{alg:speculative_decoding_framework}). The prefixes and verified tokens are in blue, the unverified tokens from the draft model are in red, and the tokens sampled from the residual distribution are underlined.
}
\label{fig:speculative-decoding-diagram}
\end{figure*}

In \cite{leviathan2022fast}, the drafts are verified through a sequence of token-level rejection steps.
More specifically, given a prefix $\context$, let $X_1, X_2, \ldots, X_\nt$ be one sample block of length $\nt$ from the draft model $\msmall$, where $\forall i \le \nt$,
$
    X_i \sim \msmall(\cdot \mid \context, X^{i-1}).
$
Using the conditional distributions under the target large model $\mbig$ returned by the parallel evaluation step (\znew{$\forall 0 \le i \le \nt$,
$
    \mbig(\cdot \mid \context, X^{i})
$})
, the algorithm iterates over the draft tokens sequentially, and accepts each token $X_i$ with probability 
\begin{equation} \label{eqn:accept_probability}
     \min\left\{1, \frac{\mbig(X_i \mid \context, X^{i-1})}{\msmall(X_i \mid \context, X^{i-1})} \right\},
\end{equation}

The process continues until a token is rejected, at which point an extra token is sampled, for free, according to a residual distribution (see \cref{alg:token_verify} and \cite{leviathan2022fast} for more details). We refer to this algorithm as \emph{Token Verification}.
Since its introduction in \cite{leviathan2022fast}, this
token-by-token verification procedure has been the standard for follow-up works (see \cref{sec:related}).

In this work, we make the surprising observation that {\emph{the standard token verification algorithm, is not optimal}}, and propose a \emph{strictly better method}. Our key observation is that we can increase the number of accepted tokens, while maintaining the identical distribution guarantee, by \emph{jointly verifying the entire block of draft tokens} instead of verifying each token independently.  %
Our proposed algorithm, which we call \emph{Block Verification}, has the following advantages:
\begin{itemize}
  \item \textbf{Simple to use.} The algorithm is a plug-and-play replacement of the standard token verification algorithm of speculative decoding. It
  does not incur additional computation or code complexity costs. \znew{See \cref{alg:token_verify,alg:block_verify} for a side-by-side comparison.}
    \item \textbf{Identical distribution.} Importantly, our method is not an approximation and maintains the identical distribution guarantee of speculative decoding (\znew{\cref{thm:distribution_match}}).
    \item \textbf{\znew{Optimal} improvement.} With the same drafting model, the speedup of block verification is no worse than that of standard token verification.
    Moreover, we show that block verification is an optimal verification procedure (\cref{thm:optimal_decoded_length}).
\end{itemize}

We empirically test \emph{block verification} and compare it with the standard \emph{token verification} on a range of tasks and datasets. We show that our algorithm consistently improves over block efficiency (i.e. the expected number of generated tokens) by 7\%-10\% and overall empirical wall clock times by 5\%-8\% (see \cref{tab:exp_comparison_l_8}). Notably, our algorithm provides improvements only through  the  verification  phase  of  speculative decoding, and hence the improvements can be combined with improvements obtained from other works that aim at improving the drafting phase.
Since these improvements come for free, our block verification algorithm can be used as the draft verification algorithm by default in speculative decoding implementations.

\section{A Motivating Example} \label{sec:example}
The standard token verification algorithm  stochastically rejects draft tokens with a higher probability from $\msmall$ than from $\mbig$. This is necessary to guarantee that the generated tokens follow the same distribution as that of $\mbig$. Our main observation is that considering whether to reject a block of draft tokens jointly, instead of one-by-one, can result in accepting more tokens. We now illustrate this through a simple example.

Consider the following trivial language model whose token space consists only of 2 tokens: $\a$ and $\b$. Further, assume that both the large model $\mbig$ and the small model $\msmall$ are context-independent, and specifically that $\forall \context$,

\begin{equation}\label{eqn:toy_models}
    \mbig(\a) = 1/3, \quad
    \mbig(\b) = 2/3, \quad \msmall(\a) = 2/3, \quad \msmall(\b) = 1/3.
\end{equation}

In this setting, token verification will accept each draft token $X$ independently with probability 1 if $X = \b$ and $1/2$ if $X = \a$.
With a block size of $\gamma = 2$, since the total variation (TV) distance $\dtv{\mbig}{\msmall} = 1/3$, the expected number of accepted tokens\footnote{This is different from the number of generated token in one iteration, which is the number of accepted tokens plus one (corrected token).} from $\msmall$ with the token verification algorithm is $1 - 1/3 + (1 - 1/3)^2 = 10/9$ (see analysis in \cite{leviathan2022fast}).

\medskip
\noindent \textbf{An ideal algorithm with full information.} Suppose an algorithm can decide on what tokens to accept from $\msmall$ based on the \textit{full joint distributions} of both tokens, \ie
\[
    \mbig(\a\a) = 1/9, \quad
    \mbig(\a\b) = 2/9, \quad \mbig(\b\a) = 2/9, \quad \mbig(\b\b) = 4/9,
\]
\[
    \msmall(\a\a) = 4/9, \quad
    \msmall(\a\b) = 2/9, \quad \msmall(\b\a) = 2/9, \quad \msmall(\b\b) = 1/9.
\]
The algorithm would have performed the following improved acceptance logic: always accept $X_1X_2$ when $X_1X_2 = $ $\a\b$, $\b\a$, or $\b\b$ since $\mbig(X_1X_2) \ge \msmall(X_1X_2)$, and accept $\a\a$ with probability $\mbig(\a\a)/\msmall(\a\a) = 1/4$ (correcting the samples to $\b\b$). 
The expected number of accepted tokens from $\msmall$ now becomes: $2(\msmall(\a\b)+\msmall(\b\a)+\msmall(\b\b)+1/4 \times \msmall(\a\a))=12/9 > 10/9$. This illustrates the benefit of considering the distribution of draft blocks jointly.

\medskip
\noindent \textbf{Verification with partial information.} In general the full distribution over the next block of tokens is intractable to calculate. Instead, we only have access to the conditional distributions of the next token along the \emph{sample path} of the draft block, $\mbig(\cdot \mid \context, X^{i}), \msmall(\cdot \mid \context, X^{i})$ for various $i$'s. To emphasize, the ideal rejection logic does not need access to the full distribution, but care is needed in properly assigning the residual distribution. Our block verification does exactly this, as follows.

\confirm{For the simple toy example describe above, we propose the following improved algorithm, which is a simplified version of the general block verification algorithm stated in \cref{alg:block_verify}.} When the draft tokens $X_1X_2 = \a\b$ or $\b\b$, $\probof{\text{Accept } X_1X_2} = 1$ similar to the idealized algorithm. When $X_1X_2 = \a\a$, $\probof{\text{Accept } X_1X_2} = 1/4$, and else the algorithm rejects both tokens and only corrects the first token to $\b$ since the algorithm doesn't have access to $\mbig(\cdot \mid\b)$. When $X_1X_2 = \b\a$, it always accepts $\b$, and then accepts $\a$ with probability $1/2$ (else it corrects the second token to $\b$). Importantly, the marginal distributions of the generated tokens at the first token and the second token are always $\mbig(\cdot)$. \confirm{Moreover, the algorithm only uses distributions that are conditioned on the \emph{sample path} of the draft block, and hence it works in the partial information setting.} We then simply add the generated tokens to the prefix and proceed to the next iteration. 
The expected number of accepted tokens is $2\times (\msmall(\a\b)+\msmall(\b\b)) + (1 + 1/2) \times \msmall(\b\a)  + 1/4 \times 2 \times \msmall(\a\a)=11/9$, which is better than the $10/9$ obtained by token verification.
This example proves the following result:
\begin{lemma}\label{cor:non_optimal_speed}
    The standard token verification algorithm of speculative decoding is not optimal.
\end{lemma}

Note that while the expected number of accepted tokens in the example for block verification ($11/9$) is higher than that of the standard token verification algorithm ($10/9$), it is still less than that of the ideal algorithm with access to the full distribution ($12/9$).
In \cref{sec:guarantee_block}, we show that block verification is indeed optimal in the partial information case, with natural assumptions.

\begin{figure}[h]
\begin{minipage}{0.46\textwidth}
\begin{algorithm}[H]
\caption{Token Verification}
\label{alg:token_verify}
\begin{algorithmic}[1]
\REQUIRE{Draft block $X^{\nt}$; small model distributions $\forall i< \nt, \msmall(\cdot \mid \context, X^{i})$; large model distributions $\forall i \le \nt, \mbig(\cdot \mid \context, X^{i})$.}
\STATE Sample $\eta_1, \ldots, \eta_{\nt} \sim U(0, 1)$.
\STATE Set $\tau = 0$.
\FOR{$i = 1, \ldots \nt$}
\item[]
\ifnum\ifarxiv=0
\item[]
\fi
\item[]
\STATE Set $\tt_i = \min\{\frac{\mbig(X_{i} \mid \context, X^{i-1})}{\msmall(X_{i} \mid \context,  X^{i-1})}, 1\}.$
\IF{$\eta_i \le \tt_i$}
\STATE Set $\tau = i$.
\ELSE
\STATE \textbf{break.}
\ENDIF
\ENDFOR
\IF{$\tau = \nt$}
\STATE Sample $Y$ from $\mbig(\cdot \mid \context, X^{\nt})$.
\ELSE
\STATE Sample $Y$ from $\pres^{\rm token}(\cdot \mid \context, X^{\tau})$ as in \cref{eqn:pres-token}.
\vspace{+2pt}
\ENDIF
\STATE \textbf{Return} $X^\tau, Y$.
\end{algorithmic}
\end{algorithm}
\end{minipage}
\hfill
\begin{minipage}{0.46\textwidth}
\begin{algorithm}[H]
\caption{Block Verification}
\label{alg:block_verify}
\begin{algorithmic}[1]
\REQUIRE{Draft block $X^{\nt}$; small model distributions $\forall i< \nt, \msmall(\cdot \mid \context, X^{i})$; large model distributions $\forall i \le \nt, \mbig(\cdot \mid \context, X^{i})$.}
\STATE Sample $\eta_1, \ldots, \eta_{\nt} \sim U(0, 1)$.
\STATE Set $\tau = 0$, \diff{$\pab_0 = 1$}.
\FOR{$i = 1, \ldots \nt$}
\STATE \diff{Set $\pab_i = \min \{\pab_{i-1}\frac{\mbig(X_{i} \mid \context,  X^{i-1})}{\msmall(X_{i} \mid \context, X^{i-1})}, 1 \}$.}
\STATE Set $\tb_i$ \diff{as in \cref{eqn:pacc_block}.}
\IF{$\eta_i \le \tb_i$}
\STATE Set $\tau = i$.
\ELSE
\STATE \diff{\textbf{continue.}} \label{line:continue}
\ENDIF
\ENDFOR
\IF{$\tau = \nt$}
\STATE Sample $Y$ from $\mbig(\cdot \mid \context, X^{\nt})$.
\ELSE
\STATE Sample $Y$ from $\pres^{\rm block}(\cdot \mid \context, X^{\tau})$ as in \diff{\cref{eqn:pres-block}.}
\ENDIF
\STATE \textbf{Return} $X^\tau, Y$.
\end{algorithmic}
\end{algorithm}\end{minipage}
\end{figure}

\section{Block Verification}\label{sec:algorithm}

In this section, we extend the above intuition to develop a general block verification algorithm, which works for standard speculative decoding with partial information. The high-level idea is to couple the acceptance of each draft token with other draft tokens. To do this, the algorithm considers draft sub-blocks with different lengths, and decides whether to accept each sub-block independently. The final accepted draft block is the longest accepted sub-block in the above process. The acceptance probabilities for each sub-block and the residual distributions are carefully chosen to maintain the distribution guarantee of the final output, and achieve optimal speedup.

\begin{figure}[h]
    \centering
   \begin{tcolorbox}[width=0.95\textwidth]
 Residual distribution in \cref{alg:token_verify} (Line 15): $
 \forall x \in \cX,$
 \begin{equation}
     \pres^{\rm token}(x \mid \context, X^{i}) = \frac{\max\{ \mbig(x \mid \context, X^{i}) - \msmall(x \mid \context, X^{i}), 0 \}}{\sum_{x'\in\cX} \mbig(x' \mid \context, X^{i}) - \msmall(x' \mid \context, X^{i}), 0 \}}.\label{eqn:pres-token}
 \end{equation}
 Residual distribution in \cref{alg:block_verify} (Line 15): $
 \forall x \in \cX,$
 \begin{equation}
     \pres^{\rm block}(x \mid \context, X^{i}) = \frac{\max\{ \diff{$\pab_{i}$} \cdot \mbig(x \mid \context, X^{i}) - \msmall(x \mid \context, X^{i}), 0 \}}{\sum_{x'\in\cX} \max\{ \diff{$\pab_{i}$} \cdot \mbig(x' \mid \context, X^{i}) - \msmall(x' \mid \context, X^{i}), 0 \}}.\label{eqn:pres-block}
 \end{equation}
 Acceptance probability in \cref{alg:block_verify} (Line 5): $\tb_\nt = \pab_\nt$, and when $i < \nt$,
 \begin{equation}\label{eqn:pacc_block}
    \tb_i = 
    \frac{\sum_{x' \in \cX} \max\{\pab_i \cdot \mbig(x' \mid \context, X^{i}) -\msmall(x' \mid \context, X^{i}), 0\}}{\sum_{x' \in \cX}  \max\{\pab_{i}\cdot \mbig(x' \mid \context, X^{i}) -\msmall(x' \mid \context, X^{i}), 0\} + 1 - \pab_i}.
\end{equation}
 \end{tcolorbox}
    \caption{The acceptance probabilities and residual distributions in \cref{alg:token_verify,alg:block_verify}.}
    \label{fig:missing_definition}
\end{figure}

See \cref{alg:block_verify} for a sketch implementation of block verification, and \cref{alg:token_verify} for a sketch implementation of the standard token verification for comparison.
Note that the implementations follow the same overall structure (the differences are highlighted).

\znew{Importantly, token verification stops as soon as a token is rejected (the \textbf{break} in Line~9 of \cref{alg:token_verify}), while block verification always operates on the full block. Equivalently, in token verification, $\tau = \argmin \{ \eta_i \le \tt_i\}$ while in block verification, $\tau = \argmax \{ \eta_i \le \tb_i\}$.}
\znew{This difference makes \specblockt tend to accept longer sub-blocks compared to \spectokent, resulting in higher block efficiencies. In \cref{fig:acceptance_length}, we plot the empirical complementary CDF of the acceptance length for both algorithms with the toy models introduced in \cref{eqn:toy_models} to demonstrate this.
}
\begin{figure}[h]
    \centering
    \includegraphics[width = 0.45\textwidth]{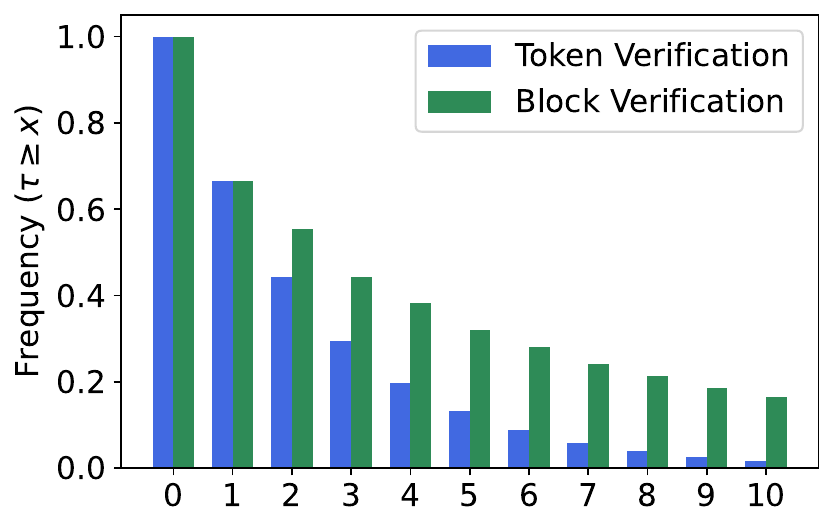}
    \caption{\znew{Empirical complementary CDF of $\tau$ for both algorithms with draft length $\nt = 10$. The draft and target models are the context-independent toy models introduced in \cref{eqn:toy_models}. }}
    \label{fig:acceptance_length}
\end{figure}

See \cref{alg:speculative_decoding_framework} for the outer loop of the speculative decoding algorithm, which remains unchanged for both verification methods.
See \cref{fig:missing_definition} for additional definitions. See \cref{sec:python_code} for sketch Python implementations. \znew{Due to the simplicity of the change, the algorithm can be easily implemented without incurring additional code complexity in practical systems.}

\begin{algorithm}[h]
\caption{Speculative decoding (\spec) \citep{leviathan2022fast}}
\label{alg:speculative_decoding_framework}
\begin{algorithmic}[1]
\REQUIRE{Prefix $\context$, target model $\mbig$, draft model $\msmall$. Draft length $\nt$. Verification algorithm $\draftverify$. }
\WHILE{$\eos \notin (X^\tau, Y)$}
\STATE Sample $X_1, \ldots, X_{\nt} \sim  \msmall(\cdot \mid \context)$ using autoregressive sampling, keep the conditional\\
probabilities at each step $\msmall(\cdot \mid \context, X^{i})$ for $i = 0, \ldots, {\nt-1}$. \hfill \coloredcomment{Obtain draft block.}
\STATE Call the large model $\mbig$ and compute conditional probabilities $\mbig(\cdot \mid \context, X^{i})$ \\
for $i = 0, 1, \ldots, \nt$ in parallel. \hfill \coloredcomment{Parallel scoring.}
\STATE Get the accepted tokens with draft verification \hfill \coloredcomment{Draft verification and correction.}
\[
   X^\tau, Y = \draftverify(X^{\nt}, \{\msmall(\cdot \mid \context, X^{i}) \}_{i = 0}^{\nt-1}, \{\mbig(\cdot \mid \context, X^{i}) \}_{i = 0}^{\nt}).
\]
\STATE $\context \leftarrow \context, X^\tau, Y.$ \hfill \coloredcomment{Add decoded tokens to the prefix.}
\ENDWHILE
\end{algorithmic}
\end{algorithm}

\paragraph{Theoretical guarantees.} Speculative decoding with block verification preserves the distribution of its outputs (\cref{thm:distribution_match}). Moreover, block verification achieves the {\em optimal} speedup among all valid draft verification algorithms in the outer loop of speculative decoding (\cref{alg:speculative_decoding_framework}), resulting in a strict improvement over the standard token verification (\cref{thm:optimal_decoded_length}). \confirm{We defer the formal statements and the intuitions the parameter choices in \cref{alg:block_verify} to \cref{sec:guarantee_block}.}

\section{Theoretical Guarantees} \label{sec:guarantee_block}

In this section, we present the formal theoretical guarantees of block verification. Notably, that it produces the correct distribution and that it is optimal in terms of the expected number of generated tokens.
Let $\cM^*(\cdot \mid \context)$ denote the distribution of the sequence up to the end of the generative process under model $\cM$ and context $\context$.

\begin{definition}[Valid draft verification algorithm] \label{def:draft_verify}
\confirm{A draft verification algorithm \draftverify~takes the draft block $X^{\nt}$, small model distributions and large model distributions along the sample path, namely $\forall i< \nt, \msmall(\cdot \mid \context, X^{i})$ and $\forall i \le \nt, \mbig(\cdot \mid \context, X^{i})$ as inputs, and outputs a prefix $X^\tau, \tau \le \nt$ of $X^\nt$, and an additional token $Y$.} \draftverify~is said to be a valid draft verification algorithm if $\forall \context$, models $\msmall, \mbig$, and block length $\nt$, the outputs of \cref{alg:speculative_decoding_framework} (\spec) with verification algorithm~\draftverify~satisfy
    \begin{equation}
        \spec(\context, \mbig, \msmall, \nt, \draftverify) \simp \mbig^*(\cdot \mid \context)\footnote{We use $\simp$ to denote that two distributions are the same.}, \label{eqn:spec_goal}
    \end{equation}
\ie the distribution of the outputs is preserved.
\end{definition}

Note for example that the standard token verification algorithm is a valid draft verification algorithm (Appendix A.1 in \cite{leviathan2022fast}).

\pagebreak

We now claim the following:

\begin{theorem}\label{thm:distribution_match}
    Block verification \confirm{(\cref{alg:block_verify})} is a valid draft verification algorithm.
\end{theorem}

In other words, speculative decoding with block verification preserves the distribution of the output sequence.

We now further claim that block verification is optimal for all valid draft verification algorithms\footnote{We use \specblock and \spectoken to denote block verification and token verification respectively when convenient.}.

\begin{theorem}\label{thm:optimal_decoded_length}
For $i > 0$, let $N(i)$ be the number of decoded tokens after $i$ iterations in \cref{alg:speculative_decoding_framework}.
    For any valid draft verification algorithm \draftverify~in \cref{def:draft_verify}, we have $\forall \context, \msmall, \mbig$, $\nt$, and $i$,
    \[
        \EE_{\specblock}[N(i)] \ge  \EE_{\draftverify}[N(i)],
    \]
    where the randomness is over the randomness of the draft block and the randomness of the algorithm. 
    
    In particular,
    \[
     \EE_{\specblock}[N(i)] \ge  \EE_{\spectoken}[N(i)].
    \]
\end{theorem}

In other words, among all valid verification algorithms, speculative decoding with block verification decodes the highest number of tokens in expectation in a fixed number of iterations. Note that since the computation overhead added by block verification is negligibly small, this establishes the overall optimality of the block verification algorithm.
In particular, block verification provides a greater speedup than the standard token verification. We defer the proofs to \cref{sec:formal_guarantee}. \confirm{Below we give intuitions on the algorithm changes that contribute to achieving the above guarantees.}

\confirm{\noindent \textbf{Intuition on parameter choices and theoretical guarantees.} The key quantities for achieving the speedup and distribution matching guarantees are $\pab_i$'s. In \cref{lem:accept_prob} in \cref{proof:distribution_match}, we show that $\pab_i$ corresponds to the probability that the sub-block $X^{i}$ will be kept in the final output. This is guaranteed by choosing the per-step acceptance probability properly since block verification keeps the longest accepted sub-block. Next we discuss how $\pab_i$'s contribute to the distribution matching and optimality guarantees.}

\noindent \confirm{\textit{Distribution matching guarantee (\cref{thm:distribution_match}).} To start, we ignore the minimum operation in the recursive definition of $\pab_i$'s. In such case, each $\pab_i$ is simply $\mbig(X^i\mid\context) / \msmall(X^i\mid\context)$, which is an upper bound on the actual $\pab_i$'s.  As shown in \cref{lem:accept_prob}, for any $X^i$, the probability that it is in the accepted block is $\msmall(X^i) \pab_i(X^i)$. Since the draft block $X^i$ is generated with probability $\msmall(X^i\mid\context)$, this guarantees that the probability of getting $X^i$ by accepting it from the draft will be at most $\mbig(X^i\mid\context),$ and hence the algorithm is not accepting $X^i$ more than needed.
} 

\confirm{The remaining part is to choose a suitable residual distribution $\pres^{\rm block}$'s so that the distribution on the next token follows $\mbig(\cdot \mid X^i)$. Note that for any possible next token $x$,  $(X^i, x)$ could also be obtained by accepting $X^{i+1}$ when $X_{i+1} = x$, 
with a probability of $\msmall(X^{i}, x)\pab_{i+1}(X^{i}, x)$, 
which should be subtracted to obtain the residual mass on $(X^i, x)$. %
This leads to the choice of $\pres^{\rm block}$ in \cref{eqn:pres-block} after proper normalization.}

\confirm{\textit{Optimality guarantee (\cref{thm:optimal_decoded_length})}. For optimality guarantee, the main proof is to show that for any prefix $X^i$ in the draft block, $\pab_i(X^i)$ is the maximum probability that a valid verification algorithm can accept $X^i$, which is stated in \cref{lem:accept_prob_upper}. This implies that in one iteration, block verification accepts the most tokens in expectation, and the multi-iteration case can be obtained by an induction argument.}

\confirm{To see that why $\pab_i(X^i)$ is the upper bound on the acceptance probability, we show that this is necessary to guarantee that for any prefix $X^i$ that could be obtained from multiple draft sample paths, the distribution over subsequent tokens are always the same $\mbig(\cdot \mid \context, X^i)$.
This enables block verification to be used as a plug-and-play replacement of token verification in the outer loop of speculative decoding (\cref{alg:speculative_decoding_framework}).}

\pagebreak

Finally, we note that the optimality guarantee holds for all verification algorithms that can be used in \cref{alg:speculative_decoding_framework} as is. 
Specifically, there exist verification procedures that force the decoding logic to depend on the previous accept/reject decisions that produce more accepted tokens in average \emph{in one iteration}. However, this will affect the decoding speed in subsequent iterations. In \cref{sec:block_verify_greedy}, we present such an algorithm and name it \emph{greedy block verification}. We empirically observe that block verification consistently outperforms it, so we include it mainly as a theoretical result.
\section{Experiment Setup}

\confirm{We conduct experiments using PALM-2 models \citep{chowdhery2022palm}, and Vicuna models \citep{vicuna2023}.}

\confirm{For the experiments on PALM-2 models, we use PALM-2-S as the large target model and PALM-2-XXS / PALM-2-XXXS as the small drafter model.} The order of the sizes of the models is PALM-2-XXXS $<$ PALM-2-XXS $<$ PALM-2-S.
We evaluate on prompts from a wide range of datasets and tasks, including language modeling with one-billion language benchmark (LM1B) \citep{chelba2013one}, 
ChatGPT prompts sourced from LearnGPT (GPT Prompt) \citep{chatgptprompts}, %
reasoning questions (WebQA) \citep{berant-etal-2013-semantic}, physical commonsense reasoning questions (PIQA) \citep{Bisk_Zellers_Le-bras_Gao_Choi_2020}, scraped conversations with ChatGPT (ShareGPT) \citep{chatgptprompts, sharegpt}, summarization tasks (XSum) \citep{Narayan2018DontGM}, grade school math problems (GSM8K) \citep{cobbe2021gsm8k}, and German to English translation (WMT DeEn) \citep{bojar-EtAl:2014:W14-33}.
For all datasets, we decode the first $1000$ prompts using a max input prompt length of $512$ and decode up to $128$ output tokens. \confirm{We use a batch size of 1 in all experiments except for the experiments in \cref{app:multi-draft}. Note that since our method only modifies the verification phase of the algorithm and doesn't introduce additional draft tokens, the speedup we get is independent of the batch size. We use a temperature of 1.0 for the experiments on PALM-2 models.}

\confirm{For Vicuna family of models~\citep{vicuna2023}, we conduct the set of experiments in Spec-Bench \citep{xia-etal-2024-unlocking}. We discussed detailed settings for these expereiments in \cref{sec:exp_specbench}.}
\section{Results} \label{sec:experiments}

\znew{We focus our main experiments on the comparison between block verification and token verification.} \znew{Recent works \citep{sun2023spectr, miao2023specinfer} have extended speculative decoding to the case with multiple draft blocks to improve block efficiency. However, these methods also increase the required computation from the large model to verify the drafts, which is undesirable when query batching is performed. We empirically show that our method outperforms these methods in the large batch setting even with only one draft block. We defer the results to \cref{app:multi-draft} and focus on the one draft case in the main section below.}

\subsection{Experimental results on PALM-2 models}

We empirically compare speculative decoding with block verification to speculative decoding with token verification, and find that block verification provides small yet consistent improvements in a wide range of settings, both when measuring idealized \emph{block efficiency} and real world \emph{wall clock time}.

\emph{Block efficiency} measures the speedup in an idealized settings where we neglect the evaluation time of the draft model and assume that we have enough compute capacity for evaluating the large model on all draft prefixes in parallel. Specifically, it measures the average number of decoded tokens per serial call to the target model. We observe consistent improvements for all datasets and draft models.
For $\nt = 8$ with PALM-2-XXS as the drafter, the improvement in block efficiency ranges from $7.00\%$ to $10.06\%$ with an average of $8.30\%$. 

We also observe consistent improvements in \emph{wall clock time}, which measures the actual speedup, including all the real-world overheads.
See \citep{leviathan2022fast, chen2023accelerating} for a more detailed discussion of these overheads.
For $\nt = 8$ with PALM-2-XXS as the drafter, the improvement in block efficiency ranges from $5.36\%$ to $8.14\%$ with an average of $6.49\%$. The detailed numbers for this setting are listed in \cref{tab:exp_comparison_l_8}.

\begin{table*}[h]
\caption{Speedup comparison between token verification (\textsc{TokenV}) and block verification (\textsc{BlockV}) with $\nt = 8$ \confirm{with PALM-2-S as the target model and PALM-2-XXS as the draft model} on various datasets and tasks. We list the average and standard deviation across 3 runs with different seeds on $1000$ test prompts.}
 \setlength{\tabcolsep}{2pt}
\begin{center}
\begin{tabular}{  c c c c c c c}
\toprule
\multirow{2}{*}{Dataset} &  \multicolumn{3}{c}{Block efficiency} & \multicolumn{3}{c}{Wall clock time speedup over baseline} \\
\cmidrule(lr){2-4} \cmidrule(lr){5-7} 
& \textsc{TokenV} & \textsc{BlockV}   & Improve. $\uparrow$ \% & \textsc{TokenV} & \textsc{BlockV}  & Improve. $\uparrow$ \% \\ 
 \midrule
LM1B & $3.21\pm 0.01$ & $3.49\pm 0.02$ & $8.68\pm 0.79$ & $2.17\pm 0.01$ & $2.32\pm 0.01$ & $6.85\pm 0.74$ \\
GPT Prompt & $3.41\pm 0.04$ & $3.76\pm 0.02$ & $10.06\pm 1.66$ & $2.30\pm 0.02$ & $2.48\pm 0.01$ & $8.14\pm 1.55$ \\
WebQA & $3.44\pm 0.01$ & $3.70\pm 0.01$ & $7.53\pm 0.24$ & $2.32\pm 0.00$ & $2.45\pm 0.01$ & $5.75\pm 0.22$ \\
PIQA & $3.40\pm 0.02$ & $3.68\pm 0.00$ & $8.30\pm 0.62$ & $2.29\pm 0.01$ & $2.44\pm 0.00$ & $6.52\pm 0.58$ \\
ShareGPT & $3.34\pm 0.01$ & $3.62\pm 0.03$ & $8.45\pm 0.98$ & $2.25\pm 0.01$ & $2.40\pm 0.02$ & $6.68\pm 0.91$ \\
XSum & $3.49\pm 0.02$ & $3.76\pm 0.01$ & $7.63\pm 0.94$ & $2.35\pm 0.01$ & $2.49\pm 0.01$ & $5.82\pm 0.88$ \\
GSM8K & $3.81\pm 0.01$ & $4.15\pm 0.03$ & $8.74\pm 0.56$ & $2.55\pm 0.01$ & $2.73\pm 0.02$ & $6.84\pm 0.51$ \\
WMT-DeEn & $3.19\pm 0.01$ & $3.41\pm 0.02$ & $7.00\pm 0.78$ & $2.15\pm 0.01$ & $2.27\pm 0.01$ & $5.36\pm 0.73$ \\
\midrule
Average & $3.41$ &	$3.70$ & $8.30$ & $2.30$ &	$2.45$  &$6.49$ \\
  \bottomrule
\end{tabular}
\end{center}
\label{tab:exp_comparison_l_8}
\end{table*}

\paragraph{The effect of draft length $\nt$.} We also perform comparisons of the algorithms for other block lengths ($\nt = 4$ and $\nt = 6$) and observe consistent improvements. We plot the average improvement over all datasets in \cref{fig:average_improvement} with the numbers in \cref{tab:avg_improvement}. With the same drafter, the relative improvement of block verification over token verification increases as $\nt$ increases. This is consistent with our intuition since when $\nt = 1$, the two algorithms are the same and as $\nt$ increases, block verification would benefit more from coordinating the acceptance rule considering the realization of all tokens in the draft block. 

\znew{As shown in \cref{tab:avg_improvement}, similar to \spectokent, the block efficiency of \specblockt increases as $\nt$ increases. However, the wall clock speedup peaks at a certain draft length ($\gamma = 4$ or $\gamma = 6$ for all settings) due to the increased computation cost in the verification phase. Hence we focus on $\gamma \le 8$ in our experiments.}

\paragraph{The effect of the drafter.} We also consider the effect of the quality of the drafter on the improvement. In \cref{tab:avg_improvement}, we list the average block efficiency and wall clock speed up under different draft lengths for both drafters. Note that PALM-2-XXS is a larger model than PALM-2-XXXS, and hence a better drafter in terms of quality, as demonstrated by the better average block efficiencies in the table.
In \cref{fig:average_improvement}, we plot the average improvement under different drafter models, PALM-2-XXS and PALM-2-XXXS. The improvements hold for both drafters. And the relative improvement in block efficiency under PALM-2-XXS  is greater than that under PALM-2-XXXS.
This shows that the improvement obtained from block verification can be combined with the improvement on the quality of the drafter, and the improvement might be more significant under better drafters.
    
\begin{figure}[h]
        \begin{minipage}{0.45\textwidth}
    \centering
\setlength{\tabcolsep}{3pt}
\begin{tabular}{ c c  c  c  c c }
\toprule
\multirow{2}{*}{$\nt$} & \multirow{2}{*}{Drafter} & \multicolumn{2}{c}{\textsc{TokenV}}  & \multicolumn{2}{c}{\textsc{BlockV}}  \\ 
 & & BE  & WS  & BE  & WS \\ 
 \midrule
\multirow{2}{*}{$4$ } & XXS & $2.89$ & $2.44$ & $2.99$ & $2.50$ \\
& XXXS & $2.35$ & $2.36$ & $2.43$ & $2.43$ \\ 
\midrule 
\multirow{2}{*}{$6$ }& XXS & $3.23$ & $2.43$ & $3.43$ & $\mathbf{2.54}$ \\
& XXXS & $2.50$ & $2.39$ & $2.63$  & $2.50$ \\
\midrule
\multirow{2}{*}{$8$ }& XXS & $3.41$ & $2.30$ & $\mathbf{3.70}$ & $2.45$ \\
& XXXS & $2.57$ & $2.28$ & $2.73$ & $2.40$ \\
  \bottomrule
\end{tabular}
  \caption{Table on average block efficiency (BE) and wall clock speedup (WS) across all datasets for token verification (\textsc{TokenV}) and block verification (\textsc{BlockV}) with different $\nt$.  The large model is PALM-2-S and the drafter model is either PALM-2-XXS (XXS) or PALM-2-XXXS (XXXS).}
  \label{tab:avg_improvement}
    \end{minipage}
        \hfill
    \begin{minipage}{0.45\textwidth}
        \centering
    \includegraphics[width = 0.95\textwidth]{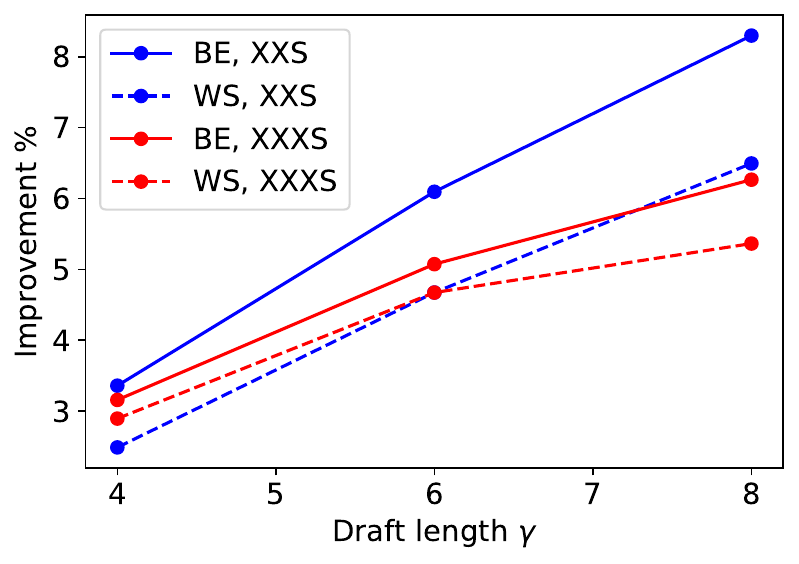}
    \caption{Average relative improvement of block verification over token verification in block efficiency (BE) and wall clock speedup (WS) across all datasets for different drafters and draft lengths.}
    \label{fig:average_improvement}
    \end{minipage}
\end{figure}

Detailed results for experiments performed with different drafters, different datasets, and different draft lengths are listed in \cref{sec:experiments_app}.

\subsection{\confirm{ Experimental results on Spec-Bench with Vicuna models}} \label{sec:exp_specbench}

\confirm{We also conduct the set of experiments proposed in Spec-Bench~\citep{xia-etal-2024-unlocking} with Vicuna family of models \citep{vicuna2023}. The benchmark includes various generation subtasks including multi-turn conversation,
retrieval-augmented generation, summarization, translation, question answering, and
mathematical reasoning. See \cite{xia-etal-2024-unlocking} for a detailed discussion of the subtasks. 
For all experiments in this section, we use a single NVIDIA H100 GPU with a batch
size of 1 and a max generation length of $1024$. We use Vincuna-7B-v1.3 as the target model and Vincuna-68M as the draft model. To study the effect of temperature, we consider temperatures in $\{0.2, 0.6, 1.0\}$ and fix $\gamma = 8$. The results are listed in \cref{tab:exp_spec_bench}. The reported numbers are the average of 3 runs.}

\confirm{Our algorithm obtains consistent improvement compared to token verification (up to 8.7\% in block efficiency and up to 6.7\% in wall clock speedup) across different draft lengths for all temperatures bigger than 0. This demonstrates the applicability of our method for different families of models.}

\confirm{\paragraph{The effect of temperature.} Note that for temperature of 0, which corresponds to greedy decoding, our algorithm degenerates to token verification and doesn't provide additional speedups. In non-zero temperature settings, the advantage is consistent and the additional improvement is higher for larger temperatures. The observation is consistent with the intuition behind the algorithm, which  obtains improvement on block efficiency by coordinating the randomness in the acceptance decisions at different token locations. %
}

\begin{table*}[h]
\caption{\confirm{Speedup comparisons between token verification (\textsc{TokenV}) and block verification (\textsc{BlockV}) on Spec-Bench \citep{xia-etal-2024-unlocking} for temperature $T \in \{0.2, 0.6, 1.0\}$.
We use Vicuna-7B-v1.3 as the target model and Vicuna-68M as the draft model. $\gamma = 8$ for all experiments and each number is an average of 3 runs.}}
 \setlength{\tabcolsep}{2pt}
\begin{center}
\begin{tabular}{ c c c c c c c}
\toprule
 \multirow{2}{*}{$T$} &  \multicolumn{3}{c}{Block efficiency} & \multicolumn{3}{c}{Wall clock speedup over baseline} \\
\cmidrule(lr){2-4} \cmidrule(lr){5-7} 
& \textsc{TokenV} & \textsc{BlockV}   & Improve. $\uparrow$ \% & \textsc{TokenV} & \textsc{BlockV}  & Improve. $\uparrow$ \% \\ 
\midrule
0.2 & 2.75  & 2.85 & 3.72 & 1.22 & 1.24 & 1.66  \\
0.6 & 2.75  & 2.90 & 5.32 & 1.23 & 1.29 & 4.24  \\
1.0 & 2.79  & 3.04 & 8.70 & 1.27 & 1.34 & 6.07  \\
  \bottomrule
\end{tabular}
\end{center}
\label{tab:exp_spec_bench}
\end{table*}

\section{Related work} \label{sec:related}

\textbf{Parallel decoding.} Our work improves speculative decoding \citep{leviathan2022fast}, a framework for decoding several tokens concurrently. \emph{Draft and verify} \citep{stern2018blockwise} was an earlier work, which proposed to independently predict and decode several tokens in parallel, for the greedy decoding case (zero temperature). Speculative decoding has later also been proposed in \cite{chen2023accelerating}.

\begin{table*}[h]
\centering
\caption{Recent works based on the draft and verify framework.
Temperature 0 refers to greedy decoding and non-zero temperature refers to sampling.}

 \setlength{\tabcolsep}{3pt}
 \vspace{+10pt}
\begin{tabular}{ c c c c c }
 \toprule
Work & \# drafts & Temp. & Drafting & Verification \\
 \midrule
 \cite{stern2018blockwise}& $1$ & $0$ & parallel softmax layers  & token matching \\ 
 \cite{yang2023inference} & $1$ & $0$ & additional 
 text  &  token matching \\
 \midrule
 \cite{leviathan2022fast}& $1$ & $\geq 0$ & small LM  & \spectoken~ (\cref{alg:token_verify})  \\
  \cite{chen2023accelerating} & $1$ & $\geq 0$& small LM  & \spectoken~ (\cref{alg:token_verify})  \\
  \cite{he2023rest}  & $1$ & $\geq 0$ & database retrieval  & \spectoken~ (\cref{alg:token_verify})  \\
  \cite{chen2023cascade}& $1$ & $\geq 0$ & cascade of small LMs & \spectoken~ (\cref{alg:token_verify})  \\
  \cite{sun2024triforce} & $1$ & $\geq 0$ & hierarchical drafters & \spectoken~(\cref{alg:token_verify}) \\
  \cite{zhou2023distillspec} & $1$ & $\geq 0$ & distilled small LMs & \spectoken~(\cref{alg:token_verify})\\
  \cite{liu2023online} & $1$ & $\geq 0$ & distilled small LMs & \spectoken~(\cref{alg:token_verify})\\
   \cite{gloeckle2024better} & $1$ & $\geq 0$ & parallel softmax layers & \spectoken~(\cref{alg:token_verify})\\
   \cite{zhang2024draft} & $1$ & $\geq 0$ & layer skip & \spectoken~(\cref{alg:token_verify})\\
    \cite{elhoushi2024layerskip} & $1$ & $\geq 0$ & early exit  & \spectoken~(\cref{alg:token_verify})\\
    \midrule
    {\bf This work}& $1$ & $\geq 0$ & small LM &  \specblock~ (\cref{alg:block_verify}) \\
 \midrule
 \cite{sun2023spectr} & $\geq 2$ & $\geq 0$ & small LM  &   SpecTr \\
  \cite{miao2023specinfer}  & $\geq 2$ & $\geq 0$& small LM  &   multi-round  \spectoken\\
  \cite{li2024eagle} & $\geq 2$ & $\geq 0$& small LM  &   multi-round  \spectoken\\
  \cite{chen2024sequoia} & $\geq 2$ & $\geq 0$& small LM  &   multi-round  \spectoken \\
  \bottomrule
\end{tabular}
\label{tab:related}
\end{table*}

\medskip
\noindent\textbf{Single draft improvements.}
There have been many works aiming to improve speculative decoding without making use of more than one decoding draft. In Table~\ref{tab:related}, we list a set of works in the draft and verify framework with a breakdown of their drafting and verification algorithms. See \cite{xia-etal-2024-unlocking} for a comprehensive study.  
In the single-draft case, %
 several works have worked on 
improving the drafting phase of speculative decoding \citep{he2023rest, chen2023cascade, sun2024triforce, zhou2023distillspec, liu2023online, gloeckle2024better, zhang2024draft, elhoushi2024layerskip}.
However, these algorithms all use the same token verification algorithm. Our proposed block verification algorithm can be used in tandem with the drafting techniques in \cref{tab:related}, yielding combined gains.
We leave a more systematic study of the improvement of block verification in these cases for future study.

\znew{The only other work that we are aware of that improves the verification step in speculative decoding is the independent work of \cite{hu2024accelerated}, which uses tree Monte Carlo to improve speculative decoding in the single draft case, and have proved that their algorithm improves over token verification. On the contrary, we prove that our algorithm achieves the optimal speedup among \revise{all valid verification algorithms}, including theirs.  Our  algorithm also requires minimal changes to the original token verification algorithm, making it easy to implement and adapt everywhere in practice.}

\medskip
\noindent \textbf{Multiple drafts.} Recently, speculative decoding is extended to multiple drafts \citep{sun2023spectr, miao2023specinfer} and new verification algorithms for the multi-draft scenario are proposed \citep{li2024eagle, chen2024sequoia}. While increasing the number of draft sequences has shown to improve the overall speedup, it comes at the cost of more computation. \znew{In \cref{app:multi-draft}, we show that in the large batch setting, where the inference is less memory bound, our method outperforms these methods without increasing the number of draft blocks.} In all of these works, the verification algorithm is a generalization of the token verification procedure. Extending block verification to the multi-sample case is an interesting future direction.

\section{Discussion}
We showed that the standard \emph{token verification} algorithm used by speculative decoding is not optimal.
Further, we proposed a new verification algorithm, \emph{block verification} and proved that it is an optimal verification algorithm.
We also demonstrated empirically that block verification consistently outperforms token
verification in a range of tasks.
While the theoretical proofs are somewhat involved, the actual implementation of block verification is not more complex than the standard algorithm (see \cref{sec:python_code}), and since our proposed algorithm can only perform better, never worse, than the standard token verification algorithm (see \cref{thm:optimal_decoded_length}), it can be used as a good default in speculative decoding implementations.

\bibliographystyle{abbrvnat}
\bibliography{references}

\newpage
\appendix
\section{Python Implementation}
\label{sec:python_code}

In this section we provide a sketch implementation of block verification (\cref{alg:block_verify}) in Python. 
Note that these are meant for illustration purposes only and are not fit for practical use.

Let $V = |\cX|$ be the size of the vocabulary.
The inputs to the algorithm are:
\begin{itemize}
    \item \texttt{ps}: an $(\nt+1)\times V$ \texttt{numpy} array with the distributions from the large model $\mbig(\cdot \mid \context, X^{i})$;
    \item \texttt{qs}: an $\nt \times V$ \texttt{numpy} array with the distributions from the draft model $\msmall(\cdot \mid \context, X^{i})$;
    \item \texttt{drafts}: a length-$\nt$ \texttt{numpy} array with the ids of the draft tokens $X^\nt$;
\end{itemize}

\begin{minted}[frame=lines, framesep=2mm]{python}
def block_verification(
  ps: np.ndarray, qs: np.ndarray, drafts: np.ndarray) -> list[int]:
  draft_length, vocab_size = qs.shape
  qs.resize((draft_length+1, vocab_size)) # Append a zero vector
  token_sequence = [] # Will include the token sequence we return
  accept_probability = 1.0 # Acceptance prob. for each sub-block
  probability_ratios = ps / qs
  # Add one token to indicate rejecting the sequence
  vocab_plus_one = np.arange(vocab_size + 1)
  for token_index, token_value in enumerate(xs):
    # Unnormalized residual probability
    sampling_weights[:vocab_size] = np.maximum(
        0, ps[token_index] * accept_probability - qs[token_index])
    # Unnormalized probability of rejecting the sequence
    sampling_weights[vocab_size] = 1 - accept_probability
    sampling_weights /= np.sum(sampling_weights)
    chosen_token = np.random.choice(vocab_plus_one,
                                    p=sampling_weights)
    # Update the sequence
    if chosen_token < vocab_size:
      token_sequence = xs[:token_index] + [chosen_token]
    # Update the acceptance probability
    accept_probability = min(1, probability_ratios[
        token_index, token_value] * accept_probability)
  return token_sequence
\end{minted}

For reference, here is a sketch implementation of the token verification algorithm (\cref{alg:token_verify}):
\begin{minted}[frame=lines, framesep=2mm]{python}
def token_verification(
  ps: np.ndarray, qs: np.ndarray, drafts: np.ndarray) -> list[int]:
  draft_length, vocab_size = qs.shape
  qs.resize((draft_length+1, vocab_size)) # Append a zero vector.
  token_sequence = [] # Will include the token sequence we return
  probability_ratios = ps / qs
  token_index = 0
  vocab_range = np.arange(vocab_size)
  for token_value in xs:
    accept_probability = probability_ratios[token_index, token_value]
    if (not np.isfinite(accept_probability) or 
        np.random.random() > accept_probability): # Rejection
      break
    token_index += 1
    token_sequence.append(token_value)
  # Calculate the residual distribution
  sampling_weights = np.maximum(0, ps[token_index] - qs[token_index])
  sampling_weights /= np.sum(sampling_weights)
  token_sequence.append(np.random.choice(vocab_range,
                                         p=sampling_weights))
  return token_sequence
\end{minted}
\section{Formal Proofs} \label{sec:formal_guarantee}

We start by setting up a few necessary notations. Let $\cX$ be the space of output tokens. %
For $\ell > 1$, we use $\cM^\ell(\cdot \mid \context)$ to denote the joint distribution of the next $\ell$ tokens conditioned on the prefix under $\cM$, \ie for all $x_{1}, \ldots x_{\ell} \in \cX^\ell$,
$
    \cM^\ell(x_{1}, \ldots, x_{\ell} \mid \context) = \prod_{i = 1}^\ell \cM(x_{i} \mid \context, x^{i-1}).
$
We use $\cM^*(\cdot \mid \context)$ to denote the distribution of the sequence up to the end of the generative process. %
Below we first describe a necessary and sufficient condition for a valid draft verification algorithm in \cref{alg:speculative_decoding_framework}. %

\begin{lemma} \label{lem:verify_goal}
    $\forall \context, \msmall, \mbig$, $\nt$, let $X^\nt$ be generated from $\msmall^\nt(\cdot \mid \context)$, and
\[
   X^\tau, Y = \draftverify(X^\nt, \{\msmall(\cdot \mid \context, X^{i}) \}_{i = 0}^{\nt-1}, \{\mbig(\cdot \mid \context, X^{i}) \}_{i = 0}^{\nt}).
\]
Let $Z^{\nt-\tau}$ be generated from   $\mbig^{\nt-\tau}(\cdot \mid \context, X^\tau, Y)$.

 $\draftverify$ is a valid draft verification algorithm (\cref{def:draft_verify}) if and only if $\forall \context, \msmall, \mbig$, $\nt$,
    \begin{equation} \label{eqn:verify_goal}
        X^\tau, Y, Z^{\nt-\tau} \simp \mbig^{\nt+1}(\cdot \mid \context).
    \end{equation}
\end{lemma}

\begin{proof}
We first prove the forward direction (\cref{eqn:verify_goal} implies that $\draftverify$ satisfies \cref{def:draft_verify}) by induction on the maximum generation length of $\mbig(\cdot \mid \context)$. When the maximum generation length is $0$, for all new context $\context'$, we have the next token is a point mass over $\eos$, \ie
\[
    \mbig(x \mid \context, \context') = \delta \{x = \eos\}.
\]

Then \cref{eqn:verify_goal} implies that \draftverify~will only output $\eos$, which is the same as \cref{def:draft_verify}. Suppose \cref{eqn:spec_goal} holds for all context and $\mbig$ with generation length at most $T$, for a context $\context$ and $\mbig$ with maximum generation length at most $T + 1$, we have that the output of $\spec(\context, \mbig, \msmall, \nt, \draftverify)$ is 
\[
    X^\tau, Y, \spec((\context, X^\tau, Y), \mbig, \msmall, \nt, \draftverify).
\]

Let $Z^{\nt - \tau}$ be the first $\nt - \tau$ tokens from $\spec((\context, X^\tau, Y), \mbig, \msmall, \nt, \draftverify)$, and $O^{*}$ be the tokens after. Since $X^\tau, Y$ is at least of length one, the generation length of $\mbig(\cdot \mid \context, X^\tau, Y)$ is at most $T$. By the induction hypothesis, we have
\[
    Z^{\nt - \tau} \simp \mbig^{\nt-\tau}(\cdot \mid \context, X^\tau, Y),
\]
and
\[
  O^{*} \simp   \mbig^{*}(\cdot \mid \context, X^\tau, Y, Z^{\nt-\tau}).
\]
And hence by \cref{eqn:verify_goal},
\begin{align*}
   \spec(\context, \mbig, \msmall, \nt, \draftverify) & =  X^\tau, Y, \spec((\context, X^\tau, Y), \mbig, \msmall, \nt, \draftverify)  \\
   & = X^\tau, Y, Z^{\nt - \tau}, O^{*}\\
    & \simp  \mbig^{*}(\cdot \mid \context).
\end{align*}

This completes the proof for the forward direction.

For the backward direction, we have \cref{eqn:spec_goal} implies that for all $X^\tau, Y$,
\[
    \spec((\context, X^\tau, Y), \mbig, \msmall, \nt, \draftverify)[:\nt-\tau]\footnote{We use ${\bm v}[i:j]$ to denote the entries $i$ to $j$ in ${\bm v}$.} \simp \mbig^{\nt-\tau}(\cdot \mid \context, X^\tau, Y).
\]
Let $Z^{\nt - \tau}$ be a draw from $\mbig^{\nt-\tau}(\cdot \mid \context, X^\tau, Y)$, then
\[
Z^{\nt - \tau} \simp \spec((\context, X^\tau, Y), \mbig, \msmall, \nt, \draftverify)[:\nt-\tau].
\]
And hence when $X^\tau, Y$ is the output of $\draftverify$,
\begin{align*}
    X^\tau, Y, Z^{\nt - \tau} & \simp X^\tau, Y, \spec((\context, X^\tau, Y), \mbig, \msmall, \nt, \draftverify)[:\nt-\tau] \\
    & \simp \spec(\context, \mbig, \msmall, \nt, \draftverify)[:\nt+1] \\
    & \simp \mbig^{\nt+1}(\cdot \mid \context),
\end{align*}
\znew{where the last derivation follows from \cref{eqn:spec_goal} in \cref{def:draft_verify}.}
\end{proof}

In all proofs below, we fix the context $\context$, and the models $\msmall$ and $\mbig$. We note that the proofs won't use specific information about these choices and hence can be easily extended to all cases.

\subsection{Proof of \cref{thm:distribution_match}} \label{proof:distribution_match}
By \cref{lem:verify_goal}, it would be enough to prove that \specblockt satisfies \cref{eqn:verify_goal}.
For simplicity, we often refer to the sequence $(X^\tau, Y, Z^{\nt - \tau})$ by $O^{\nt + 1}$. Note that $O_{\nt+1} \sim \mbig(\cdot \mid \context, O^\nt)$ always holds since when $\tau < \nt$, $O_{\nt+1} \sim \mbig(\cdot \mid \context, O^\nt)$ by definition and when $\tau = \nt$, $O_{\nt+1} = Y \sim \mbig(\cdot \mid \context, X^\nt) = \mbig(\cdot \mid \context, O^\nt)$. Hence it is enough to prove the following
\begin{equation} \label{eqn:output_block_distribution}
   \forall \ell \le \nt, \forall x^{\ell} \in \cX^\ell, \quad \probof{O^{\ell} = x^{\ell}}  = \mbig^\ell(x^{\ell} \mid \context),
\end{equation}

Note that in \specblockt, $\pab_i$'s depend on the draft tokens $X^\nt$. The following definition makes this explicit. Let $\pab_i$ be such that $\pab_0 = 1$, and $\forall 1 \le i \le \nt$, $x^i \in \cX^i$,
\begin{equation}
    \pab_i(x^i \mid \context) = \min \left \{ \pab_{i-1}(x^i \mid \context)\frac{\mbig(x_i \mid \context, x^{i-1})}{\msmall(x_i \mid \context, x^{i-1})}, 1\right \}. \label{eqn:def_as}
\end{equation}
For most cases, when the prefix $\context$ is clear, we will ignore $\context$ and simply use $\pab_{i}(x^i) = \pab_i(x^i \mid \context)$. We will only make the prefix explicit when necessary.

We first state the following lemma on the distribution of the number of tokens accepted by \specblockt.
\begin{lemma} \label{lem:accept_prob}
Let $X^\nt \sim \msmall^\nt(\cdot \mid \context)$, and 
\[
      X^\tau, Y = \specblock(X^\nt, \{\msmall(\cdot \mid \context, X^{i}) \}_{i = 0}^{\nt-1}, \{\mbig(\cdot \mid \context, X^{i}) \}_{i = 0}^{\nt}).
\]
Then we have $\forall i \le \nt$, and $x^i \in \cX^i$,
\[
    \probof{\tau \ge i \mid X^i = x^i} = \pab_i (x^i).
\]
\end{lemma}

We first prove \cref{thm:distribution_match} based on \cref{lem:accept_prob} and defer the proof of the lemma to \cref{sec:pab_acc_prob_proof}.
We prove \cref{eqn:output_block_distribution} by induction on the time index $\ell$.
When $\ell = 1$, $O_1$ is either $X_1$, or a residual sample from $\pres^{\rm block}(\cdot \mid \context)$ where $$\pres^{\rm block}(\cdot \mid \context) = \frac{\max\{ \mbig(x \mid \context ) - \msmall(x \mid \context ), 0\}}{\sum_{x'} \max\{ \mbig(x' \mid \context ) - \msmall(x' \mid \context), 0\}}, $$  
Hence we have $\forall x \in \cX$, by \cref{lem:accept_prob},
\begin{align}
& \quad \probof{O_1 = x}  \nonumber \\
    & = \probof{O_1 = x, \tau \ge 1} + \probof{O_1 = x, \tau = 0} \nonumber\\
    & = \probof{X_1 = x} \probof{\tau \ge 1 \mid X_1 = x} + \sum_{x'} \probof{X_1 = x'} (1 - \probof{\tau \ge 1 \mid X_1 = x'}) \cdot \pres^{\rm block}(x \mid \context)  \nonumber\\
    & = \msmall(x \mid \context) \cdot \pab_1(x)  + \sum_{x'} \msmall(x' \mid \context)  (1 - \pab_1(x')) \cdot \pres^{\rm block}(x \mid \context)  \nonumber\\
    & = \min \{ \mbig(x \mid \context ), \msmall(x \mid \context )\} + \sum_{x'} \max\{ \mbig(x'\mid \context) - \msmall(x'\mid \context), 0\} \cdot \pres^{\rm block}(x \mid \context)  \label{eqn:def_p1}\\
    & = \min \{ \mbig(x\mid \context), \msmall(x\mid \context)\} + \max\{ \mbig(x \mid \context ) - \msmall(x\mid \context), 0\}  \label{eqn:residual_sum}\\
    & =  \mbig(x\mid \context)  \nonumber,
\end{align}

\znew{where \cref{eqn:def_p1} comes from the definition of $\pab_1$ in \cref{eqn:def_as} and \cref{eqn:residual_sum} is due to \cref{eqn:pres-block} with $i = 0$.}
Hence the \cref{eqn:output_block_distribution} holds for $\ell = 1$. Suppose \cref{eqn:output_block_distribution} holds up to  $\ell < \nt$. For $\ell = \ell + 1$, we have $O_{\ell+1}$ is either equal to $X_{\ell+1}$ when $\tau \ge \ell + 1$, or a sample from $ \pres^{\rm block}(\cdot \mid \context, X^{\ell})$ when $\tau = \ell$, or a sample from $\mbig(\cdot \mid \context, O^{\ell})$ when $\tau < \ell$. Hence $\probof{O^{\ell+1} = x^{\ell + 1}}$ can be broken down below:
\begin{align}\label{eqn:break_down}
    & \quad \probof{O^{\ell+1} = x^{\ell + 1}} =  \nonumber \\
    & \probof{O^{\ell+1} = x^{\ell + 1}, \tau \ge \ell + 1} +  \probof{O^{\ell+1} = x^{\ell + 1}, \tau =  \ell} + \probof{O^{\ell+1} = x^{\ell + 1}, \tau < \ell}
\end{align}

For the first term ($\tau \ge \ell+1$), we have
\begin{align}
    & \quad \probof{O^{\ell+1} = x^{\ell+1}, \tau \ge \ell + 1} \nonumber \\ & =  \probof{X^{\ell+1} = x^{\ell+1}} \cdot \probof{\tau \ge \ell + 1 \mid X^{\ell+1} = x^{\ell+1}}  \nonumber \\
    & = \msmall(x^{\ell+1} \mid \context) \cdot \pab_{\ell+1}(x^{\ell+1}) \nonumber \\
    & = \msmall(x^{\ell} \mid \context) \cdot \min\{ \pab_{\ell}(x^{\ell}) \mbig(x_{\ell+1}\mid \context, x^{\ell}), \msmall(x_{\ell+1}\mid \context, x^{\ell})\}. \label{eqn:all_accept}
\end{align}

For the second term ($\tau = \ell$), we have
\begin{align*}
    & \quad \probof{O^{\ell+1} = x^{\ell+1}, \tau = \ell} \\
    & =  \probof{X^{\ell} = x^{\ell}} \cdot \probof{\tau = \ell \mid X^{\ell} = x^{\ell}} \cdot \probof{O_{\ell + 1} = x_{\ell + 1}\mid O^{\ell} = x^{\ell}, \tau = \ell} \\
    & = \probof{X^{\ell} = x^{\ell}} \cdot \probof{\tau = \ell \mid X^{\ell} = x^{\ell}} \cdot \pres^{\rm block}(x_{\ell + 1}\mid \context, x^{\ell})
\end{align*}
Note that,
\begin{align*}
& \quad \probof{\tau = \ell \mid X^{\ell}= x^{\ell}}  \\
   & = \probof{\tau \ge \ell \mid X^{\ell}  = x^{\ell}} - \sum_{x} \msmall(x \mid \context, x^{\ell}) \cdot \probof{\tau \ge \ell+1 \mid \context,  X^{\ell+1} = x^{\ell},x} \\
    & = \pab_{\ell}(x^{\ell}) -  \sum_{x} \msmall(x \mid \context,  x^{\ell}) \cdot \pab_{\ell+1}(x^{\ell}, x) \\
    & =  \pab_{\ell}(x^{\ell}) - \sum_{x} \min\{\pab_{\ell}(x^{\ell}) \mbig(x \mid \context,  x^{\ell}),  \msmall(x \mid \context,  x^{\ell})\}\\
    & = \sum_x \max\{\pab_{\ell}(x^{\ell}) \mbig(x \mid \context,  x^{\ell}) - \msmall(x\mid \context,  x^{\ell}), 0\}.
\end{align*}
\znew{And hence by the definition of $\pres^{\rm block}(x_{\ell + 1}\mid \context, x^{\ell})$ in \cref{eqn:pres-block}, we have}
\begin{align}
     & \quad \probof{O^{\ell+1} = x^{\ell+1}, \tau = \ell} \nonumber  \\
     & = \msmall(x^{\ell} \mid \context) \cdot \max\{\pab_{\ell}(x^{\ell}) \mbig(x_{\ell + 1} \mid  \context, x^{\ell}) - \msmall(x_{\ell + 1} \mid  \context, x^{\ell}), 0\}.\label{eqn:last_one_reject}
\end{align}
For the third term ($\tau < \ell$), by induction, and the generation process of $O^{\nt+1}$, we have
\begin{align}
    \probof{O^{\ell+1} = x^{\ell+1}, \tau < \ell} & =   \probof{O^{\ell} = x^{\ell}, \tau < \ell} \cdot  \probof{O_{\ell + 1} = x_{\ell + 1} \mid  O^{\ell} = x^{\ell},\tau < \ell } \nonumber \\
    & = \Paren{\probof{O^{\ell} = x^{\ell}} - \probof{O^{\ell} = x^{\ell}, \tau \ge \ell}} \cdot \mbig(x_{\ell + 1} \mid \context, x^{\ell})  \nonumber \\
    & = \Paren{\mbig(x^{\ell} \mid \context) - \msmall(x^{\ell} \mid \context) \pab_{\ell}(x^{\ell})} \cdot \mbig(x_{\ell + 1} \mid \context, x^{\ell}) \label{eqn:early_reject} 
\end{align}

Plugging \cref{eqn:all_accept,eqn:last_one_reject,eqn:early_reject} into \cref{eqn:break_down}, we get  $\forall x^{\ell + 1} \in \cX^{\ell + 1}$,
\[
\probof{O^{\ell+1} = x^{\ell + 1}}  = \mbig(x^{\ell + 1} \mid \context),
\]
completing the induction step and hence the proof of \cref{eqn:output_block_distribution} and \cref{thm:distribution_match}.

\subsection{Proof of \cref{thm:optimal_decoded_length}} \label{proof:optimal_decoded_length}
    We first state the following lemma, which when combined with \cref{lem:accept_prob}, shows that in one iteration, among all valid draft verification algorithms, \specblockt accepts each subsequence with the highest probability.

\begin{lemma}\label{lem:accept_prob_upper}
For draft verification algorithms that satisfy the constraints in \cref{lem:verify_goal}, we have $\forall i \le \nt$, and $x^i \in \cX^i$,
\[
    \probof{\tau \ge i \mid X^i = x^i} \le \pab_i (x^i).
\]
\end{lemma}

We defer the proof of the lemma to \cref{sec:proof_lem_prob_upper} and first prove \cref{thm:optimal_decoded_length} based on the lemma.

We start by breaking down the expected number of decoded tokens $\EE_{\draftverify}[N(i)]$ into the distribution of $N(i)$ on different sample paths. Let $O^* = O_1, O_2, \ldots, $ be the complete output sequence from speculative decoding. We set all tokens after \eos~to be \eos~as well. Then we have
\znew{
\begin{align*}
& \quad \EE_{\draftverify}[N(i)]  = \sum_{\ell = 1}^{\infty}  \probofsub{\draftverify}{N(i) \ge \ell}  = 
    \sum_{x^* \in \cX^*}  \sum_{\ell = 1}^{\infty}  \probofsub{\draftverify}{O^* = x^*, N(i) \ge \ell} 
\end{align*}
}
Hence it would be enough to prove the following.
\begin{lemma}\label{lem:length_dominate}
For all draft verification algorithms that satisfy the constraints in \cref{lem:verify_goal}, \znew{we have $\forall \context$, and output $x^{*} \in \cX^{*}$}
\begin{equation} \label{eqn:length_dominate}
     \probofsub{\draftverify}{\znew{O^{\*} = x^{*}}, N(i) \ge \ell \mid \context} \le \probofsub{\specblock}{\znew{O^{\*} = x^{*}}, N(i) \ge \ell \mid \context} 
\end{equation}
\end{lemma}

We prove the lemma by induction on the number of iterations $i$. \znew{We first prove the following lemma for all verification algorithms.
\begin{lemma} \label{lem:one_iteration_verify}
For all $\ell \le \nt$,
\[
    \probofsub{\draftverify}{O^ \ast=x^ \ast, \tau \geq \ell \mid \boldsymbol{c}}   = \probofsub{\draftverify}{O^{\ell}=x^{\ell}, \tau \geq \ell \mid \boldsymbol{c}} \cdot \mathcal{M}_b^*\left(x^{\ell+1:  \ast} \mid \boldsymbol{c}, x^{\ell}\right).
\]
\end{lemma}
}
\begin{proof}
\znew{
When $\probofsub{\draftverify}{O^{\ell}=x^{\ell}, \tau \geq \ell \mid \boldsymbol{c}} = 0$, the bound is trivial since both sides are 0. Otherwise we have 
\begin{align}
 & \probofsub{\draftverify}{O^ \ast=x^ \ast, \tau \geq \ell \mid \boldsymbol{c}}  \nonumber \\
= &\probofsub{\draftverify}{O^{\ell+1: \ast}=x^{\ell+1: \ast}, O^{\ell}=x^{\ell}, \tau \geq \ell \mid \boldsymbol{c}} \nonumber \\
= &\probofsub{\draftverify}{O^{\ell}=x^{\ell}, \tau \geq \ell \mid \boldsymbol{c}} \cdot  \probofsub{\draftverify}{O^{\ell+1: \ast}=x^{\ell+1: \ast} \mid O^{\ell}=x^{\ell}, \tau \geq \ell, \boldsymbol{c}} \nonumber
\end{align}
It would be enough to show that
\begin{equation}
    \probofsub{\draftverify}{O^{\ell+1: \ast}=x^{\ell+1: \ast} \mid O^{\ell}=x^{\ell}, \tau \geq \ell, \boldsymbol{c}} =  \mathcal{M}_b^*\left(x^{\ell+1:  \ast} \mid \boldsymbol{c}, x^{\ell}\right).
\label{eqn:past_independent}
\end{equation}
Note that 
\begin{align}
& \quad \mathcal{M}_b^*\left(x^{\ell+1:  \ast} \mid \boldsymbol{c}, x^{\ell}\right) \nonumber \\
     & =  \probofsub{\draftverify}{O^{\ell+1: \ast}=x^{\ell+1: \ast} \mid O^{\ell}=x^{\ell}, \boldsymbol{c}}  \nonumber\\
     & = \probofsub{\draftverify}{O^{\ell+1: \ast}=x^{\ell+1: \ast}, \tau \ge \ell \mid O^{\ell}=x^{\ell}, \boldsymbol{c}} + \probofsub{\draftverify}{O^{\ell+1: \ast}=x^{\ell+1: \ast}, \tau < \ell  \mid O^{\ell}=x^{\ell}, \boldsymbol{c}} \nonumber\\
     & = \probofsub{\draftverify}{\tau \ge \ell \mid O^{\ell}=x^{\ell}, \boldsymbol{c}} \probofsub{\draftverify}{O^{\ell+1: \ast}=x^{\ell+1: \ast} \mid O^{\ell}=x^{\ell}, \tau \ge \ell, \boldsymbol{c}}  \nonumber\\
     & \quad \quad \quad \quad  + \probofsub{\draftverify}{\tau < \ell \mid O^{\ell}=x^{\ell}, \boldsymbol{c}} \probofsub{\draftverify}{O^{\ell+1: \ast}=x^{\ell+1: \ast} \mid O^{\ell}=x^{\ell}, \tau < \ell, \boldsymbol{c}}  \label{eqn:decompose}
\end{align}
When $\probofsub{\draftverify}{\tau < \ell \mid O^{\ell}=x^{\ell}, \boldsymbol{c}} = 0$, we have $\probofsub{\draftverify}{\tau \ge \ell \mid O^{\ell}=x^{\ell}, \boldsymbol{c}} = 1$, and hence
\begin{align*}
    \probofsub{\draftverify}{O^{\ell+1: \ast}=x^{\ell+1: \ast} \mid O^{\ell}=x^{\ell}, \tau \geq \ell, \boldsymbol{c}} = \mathcal{M}_b^*\left(x^{\ell+1:  \ast} \mid \boldsymbol{c}, x^{\ell}\right).
\end{align*}
When $\probofsub{\draftverify}{\tau < \ell \mid O^{\ell}=x^{\ell}, \boldsymbol{c}} > 0$, since \draftverify~is a valid verification algorithm (\cref{def:draft_verify}), we have tokens starting from location $\ell+1$ is valid draw from $\mbig(\cdot \mid \boldsymbol{c}, x^{\ell})$, \ie
\[
        \probofsub{\draftverify}{O^{\ell+1: \ast}=x^{\ell+1: \ast} \mid O^{\ell}=x^{\ell}, \tau < \ell, \boldsymbol{c}} =  \mathcal{M}_b^*\left(x^{\ell+1:  \ast} \mid \boldsymbol{c}, x^{\ell}\right).
        \]
Plugging this into \cref{eqn:decompose} completes the proof. 
}
\end{proof}

\znew{
When $i = 1$, we have that $N(1) = \tau + 1$, where $\tau$ is the number of accepted tokens. 
Hence we have}
\znew{
\begin{align}
& \quad \probofsub{\draftverify}{O^* = x^*, N(1) \ge \ell \mid \context}  \nonumber \\
& =  \probofsub{\draftverify}{O^ \ast=x^ \ast, \tau \geq \ell - 1 \mid \boldsymbol{c}}  \nonumber \\
&  = \probofsub{\draftverify}{O^{\ell-1}=x^{\ell-1}, \tau \geq \ell - 1 \mid \boldsymbol{c}} \cdot \mathcal{M}_b^*\left(x^{\ell:  \ast} \mid \boldsymbol{c}, x^{\ell - 1}\right)  \nonumber \quad \quad \quad \quad \quad \quad \triangleleft \text{\cref{lem:one_iteration_verify}} \\
& = \msmall(x^{\ell-1} \mid \context) \mbig^{*}(x^{\ell:*} \mid \context, x^{\ell-1}) \probofsub{\draftverify}{ \tau \ge \ell - 1 \mid X^{\ell-1} = x^{\ell-1}, \context}, \nonumber
\end{align}
where the last equality is because $\probofsub{\draftverify}{O^{\ell-1}=x^{\ell-1}, \tau \geq \ell - 1 \mid \boldsymbol{c}}$ is the probability of the event that $O^{\ell-1}=x^{\ell-1}$
 is contained in the accepted tokens, which happens under the joint of two events: (1) The first $\ell-1$
 tokens in the draft block from the small model  $X^{\ell - 1} = x^{\ell-1}$. This probability is $ \msmall(x^{\ell-1} \mid \context)$; (2) Conditioned on $X^{\ell - 1} = x^{\ell-1}$, at least $\ell-1$
 tokens are accepted, this is $\probofsub{\draftverify}{ \tau \ge \ell - 1 \mid X^{\ell-1} = x^{\ell-1}, \context}$, and hence
 \[
    \probofsub{\draftverify}{O^{\ell-1}=x^{\ell-1}, \tau \geq \ell - 1 \mid \boldsymbol{c}} = \msmall(x^{\ell-1} \mid \context) \cdot \probofsub{\draftverify}{ \tau \ge \ell - 1 \mid X^{\ell-1} = x^{\ell-1}, \context}.
 \]
}
\znew{
Similarly, we have
\begin{align*}
    & \probofsub{\specblock}{O^* = x^*, N(1) \ge \ell \mid \context} \\ 
    = &  \msmall(x^{\ell-1} \mid \context) \mbig^{*}(x^{\ell:*} \mid \context, x^{\ell-1}) \probofsub{\specblock}{ \tau \ge \ell - 1 \mid X^{\ell-1} = x^{\ell-1}, \context}.
\end{align*}
Note that \cref{lem:accept_prob,lem:accept_prob_upper} imply that for all verification algorithm, we have
\[
    \probofsub{\draftverify}{ \tau \ge \ell - 1 \mid X^{\ell-1} = x^{\ell-1}, \context} \le \probofsub{\specblock}{ \tau \ge \ell - 1 \mid X^{\ell-1} = x^{\ell-1}, \context}.
\]
Combining these, we have
\begin{equation}
     \probofsub{\draftverify}{O^* = x^*, N(1) \ge \ell \mid \context}  \le \probofsub{\specblock}{O^* = x^*, N(1) \ge \ell \mid \context}. \label{eqn:one_iteration}
\end{equation}
}
Suppose the lemma holds for all iterations up to $i$, for the $(i+1)$th iteration, let $\tau_{i+1}$ be the number of tokens accepted in the $(i+1)$th iteration, we have
\begin{align}
   & \quad  \probofsub{\draftverify}{O^* = x^*, N(i+1) \ge \ell \mid \context} \nonumber \\
   & =  \sum_{\ell' < \ell} \probofsub{\draftverify}{O^* = x^*, N(i) = {\ell'}, N(i+1) \ge \ell \mid \context}   \nonumber \\
   & = \sum_{\ell' < \ell} \probofsub{\draftverify}{O^* = x^*, N(i) = {\ell'} \mid \context} 
    \probofsub{\draftverify}{ \tau_{i+1} \ge \ell - \ell' - 1 \mid \context, \znew{O^* = x^*,} N(i) = \ell'} \nonumber \\
    & = \probofsub{\draftverify}{O^* = x^* \mid \context}  \sum_{\ell' < \ell} \probofsub{\draftverify}{N(i) = {\ell'} \mid O^* = x^*, \context}  \nonumber \\
    & \qquad \qquad \qquad  \qquad  \cdot \probofsub{\draftverify}{\tau_{i+1} \ge \ell - \ell' - 1 \mid \context, \znew{O^* = x^*}, N(i) = \ell'}  \nonumber \\
    & = \mbig(x^* \mid \context)  \sum_{\ell' < \ell} \probofsub{\draftverify}{N(i) = {\ell'} \mid O^* = x^*, \context} \probofsub{\draftverify}{\tau_{i+1} \ge \ell - \ell' - 1 \mid \context, \znew{O^* = x^*}, N(i) = \ell'} 
    \label{eqn:prob_iteration_ipo}
\end{align}

Let $\eta_{\draftverify}$ be a random variable distributed according to $\probofsub{\draftverify}{N(i) = {\ell'} \mid O^* = x^*, \context}$,  and 
\[
    f_\draftverify(\eta) = \probofsub{\draftverify}{\tau_{i+1} \ge \ell - \eta - 1 \mid \context, \znew{O^* = x^*}, N(i) = \eta}.
\]
Plugging these into \cref{eqn:prob_iteration_ipo}, we have
\begin{align*}
    \probofsub{\draftverify}{O^* = x^*, N(i+1) \ge \ell \mid \context} =  \mbig(x^* \mid \context) \EE_{\eta_{\draftverify}} \left[ f_\draftverify(\eta_{\draftverify})\right]
\end{align*}

Note that let $\context_\eta = \context, x^{\eta}$, we have
\begin{align}
      f_\draftverify(\eta) & =  \probofsub{\draftverify}{\tau_{i+1} \ge \ell - \znew{\eta} - 1 \mid \context, \znew{O^* = x^*}, N(i) = \znew{\eta}} \nonumber \\
     &  =  \probofsub{\draftverify}{ \tau \ge \ell - \znew{\eta} - 1 \mid \context_\eta, O^* = x^{\eta+1: *}}  \label{eqn:conditioning}
\end{align}
where \cref{eqn:conditioning} is due to the iterative structure of speculative decoding and after generating $O^{\eta} = x^\eta$ in the first $i$ iterations ($N(i) = \znew{\eta}$), the next iteration is the same as generating from scratch with context $\context_\eta = \context, x^{\eta}$.
Similarly, we have
\[
    f_\specblock(\eta) = \probofsub{\specblock}{\tau \ge \ell - \eta - 1 \mid \context_\eta, O^* = x^{\eta+1: *} }.
\]
Note that $\forall \context$,  $x^* \in \cX^*$, and $i$, we have
\begin{align} 
    \probofsub{\specblock}{\tau \ge i  \mid \context, O^* = x^{*} }  & = 
    \frac{\probofsub{\specblock}{O^* = x^{*}, \tau \ge i \mid \context  } }{\probofsub{\specblock}{ O^* = x^{*} \mid \context  }} \nonumber \\
    & =  \frac{\probofsub{\specblock}{O^* = x^{*}, \tau \ge i \mid \context  } }{\mbig^*(x^{*} \mid \context  )} \nonumber \\
    & \ge \frac{\probofsub{\draftverify}{O^* = x^{*}, \tau \ge i \mid \context  } }{\mbig^*(x^{*} \mid \context  )} 
    \label{eqn:bvtodv}  \\
    & = \probofsub{\draftverify}{\tau \ge i  \mid \context, O^* = x^{*} },
    \label{eqn:conditional_larger}
\end{align}
where \cref{eqn:bvtodv} is due to \cref{eqn:one_iteration} and that $N(1) = \tau + 1$.
\cref{eqn:conditional_larger} implies that $f_\specblock(\eta) \ge f_\draftverify(\eta)$, and hence we have 
\begin{align*}
      \probofsub{\draftverify}{O^* = x^*, N(i+1) \ge \ell \mid \context} & =  \znew{\mbig(x^* \mid \context)} \EE_{\eta_{\draftverify}} \left[ f_\draftverify(\eta)\right] \\
      & \le  \znew{\mbig(x^* \mid \context)} \EE_{\eta_{\draftverify}} \left[ f_\specblock(\eta)\right].
\end{align*}

Note that $\probofsub{\specblock}{O^* = x^*, N(i+1) \ge \ell \mid \context} = \znew{\mbig(x^* \mid \context)} \EE_{\eta_{\specblock}} \left[ f_\specblock(\eta)\right]$. It would be enough to prove that
\begin{equation}\label{eqn:expectation_greater}
    \EE_{\eta_{\draftverify}} \left[ f_\specblock(\eta)\right] \le \EE_{\eta_{\specblock}} \left[ f_\specblock(\eta)\right].
\end{equation}

\znew{Next we prove \cref{eqn:expectation_greater} using the lemma below.
\begin{lemma}[\cite{quirk1962admissibility}] \label{lem:stochastic_dominance}
 Let $f: \RR \rightarrow \RR$ be an increasing function and $X_1$ stochastically dominates $X_2$, meaning $\forall x,$ we have $\probof{X_1 \ge x} \ge \probof{X_2 \ge x}$, then we have
 \[
    \EE[f(X_1)] \ge \EE[f(X_2)].
 \]
 \end{lemma}
}
By the induction hypothesis, we have $\eta_{\specblock}$ stochastically dominates \citep{quirk1962admissibility}  $\eta_{\draftverify}$ for any valid verification algorithm. It remains to show that $f_\specblock(\eta)$ is an increasing function. By definition, since $0 \le \tau \le \gamma$, when $\eta < \ell - \nt - 1$, $f_\specblock(\eta) = 0$  and when $\eta > \ell - 1$, $f_\specblock(\eta) = 1$.
When $\ell - \nt - 1 \le \eta \le \ell - 2$, by definition and \cref{lem:accept_prob}, $f_\specblock(\eta)  = \pab_{\ell-\eta-1}(x^{\eta + 1:\ell-1} \mid \context, x^\eta)$. To see that $\pab_{\ell-\eta-1}(x^{\eta + 1:\ell-1} \mid \context, x^\eta)$ is an increasing function of $\eta$ , for $\eta' = \eta + 1$, we can obtain $\pab_{\ell-\eta'-1}(x^{\eta' + 1:\ell-1} \mid \context, x^{\eta'})$ by following the same recursion steps as in \cref{eqn:def_as} but replacing $\pab_{1}(x^{\eta + 1:\ell-1} \mid \context, x^{\eta})$ with $\pab_{0}(x^{\eta + 2:\ell-1} \mid \context, x^{\eta+1}) = 1$, and hence only increasing the values. 
  This proves that $f_\specblock$ is increasing and hence \cref{eqn:expectation_greater} holds. This implies that the induction step holds due to \cref{lem:stochastic_dominance}, completing proof of \cref{thm:optimal_decoded_length}.

\subsection{Proof of \cref{lem:accept_prob}} \label{sec:pab_acc_prob_proof}
Note that in Line 4 of \cref{alg:block_verify}, $\pab_i = \pab_i(X^i)$. 
We prove the statement by backward induction. When $i = \nt$, we have by definition of $\tb_\nt$ in \cref{fig:missing_definition}, $\forall x^\nt \in \cX^\nt$,
\[
    \probof{\tau \ge \nt \mid X^\nt = x^\nt} =\tb_\nt = \pab_\nt(x^\nt).
\]
Suppose the statement holds for $i \ge \ell$. When $i = \ell - 1$, we have
\begin{align}
     & \;\;\;\;\; \probof{\tau \ge \ell - 1 \mid X^{\ell - 1} = x^{\ell - 1}} \nonumber \\ 
     & = \sum_{x_{\ell} \in \cX} \msmall(x_{\ell} \mid 
    \context, x^{\ell - 1}) \cdot \probof{\tau \ge \ell - 1 \mid X^{\ell} = x^{\ell}} \nonumber  \\ 
    & = \sum_{x_{\ell} \in \cX} \msmall(x_{\ell} \mid 
    \context, x^{\ell - 1}) \cdot \Paren{\probof{\tau \ge \ell \mid X^{\ell} = x^{\ell}} + \probof{\tau = \ell - 1 \mid X^{\ell} = x^{\ell}}}  \nonumber\\
          &  = \sum_{x_{\ell} \in \cX} \msmall(x_{\ell} \mid  \context,  x^{\ell - 1}) \cdot \Paren{\probof{\tau \ge \ell\mid X^{\ell} = x^{\ell}}  + \probof{\tau < \ell \mid X^{\ell} = x^{\ell}} \cdot \znew{\tb_{\ell-1}}} \label{eqn:l2lm1}  \\
          & =  \sum_{x_{\ell} \in \cX} \msmall(x_{\ell} \mid  \context,  x^{\ell - 1}) \cdot \Paren{\pab_{\ell}(x^\ell) + (1 - \pab_{\ell}(x^\ell)) \cdot \znew{\tb_{\ell-1}}}, \nonumber  \\
          & = \sum_{x_{\ell} \in \cX} \msmall(x_{\ell}\mid  \context,  x^{\ell - 1}) \cdot \pab_{\ell}(x^\ell) + \znew{\tb_{\ell-1}} \cdot \sum_{x_{\ell} \in \cX} \msmall(x_{\ell} \mid \context, x^{\ell - 1}) \cdot (1 - \pab_{\ell} (x^\ell)) \nonumber \\
          & = \sum_{x_{\ell} \in \cX} \msmall(x_{\ell}\mid  \context,  x^{\ell - 1}) \cdot \pab_{\ell}(x^\ell) + \znew{\tb_{\ell-1}} \cdot (1  - \sum_{x_{\ell} \in \cX} \msmall(x_{\ell} \mid \context, x^{\ell - 1}) \pab_{\ell} (x^\ell)). \label{eqn:induction_sum}
\end{align}
\znew{\cref{eqn:l2lm1} above holds since $\tau = \ell -1$ happens under the joint event of $\tau < \ell$ and $\eta_{\ell-1} < \tb_{\ell-1}$.}
Note that in the definition of $\znew{\tb_{\ell-1}}$ (\cref{eqn:pacc_block}),
\begin{align*}
    & \;\;\;\; \sum_x\max\{\pab_{\ell-1}(x^{\ell-1})\mbig(x \mid \context, x^{\ell-1}) - \msmall(x \mid \context, x^{\ell-1}), 0\} \\
    & = \sum_x \Paren{ \pab_{\ell-1}(x^{\ell-1})\mbig(x \mid \context,  x^{\ell-1})  - \min\{ \pab_{\ell-1}(x^{\ell-1})\mbig(x \mid \context, x^{\ell-1}), \msmall(x \mid \context, x^{\ell-1})\}} \\
    & =  \pab_{\ell-1}(x^{\ell-1}) - \sum_x  \min\{ \pab_{\ell-1}(x^{\ell-1})\mbig(x \mid \context, x^{\ell-1}), \msmall(x \mid \context, x^{\ell-1})\}
\end{align*}
\znew{Plugging this into \cref{eqn:pacc_block},}
\begin{align}
   \znew{\tb_{\ell-1}} = \frac{  \pab_{\ell-1}(x^{\ell-1}) - \sum_x  \min\{ \pab_{\ell-1}(x^{\ell-1})\mbig(x \mid \context, x^{\ell-1}), \msmall(x \mid \context, x^{\ell-1})\}}{1 - \sum_x \min\{ \pab_{\ell-1}(x^{\ell-1})\mbig(x \mid \context, x^{\ell-1}), \msmall(x \mid \context, x^{\ell-1})\}}. \label{eqn:prob_not_bot}
\end{align}

Moreover, we have by the definition of $ \pab_{\ell}(x^\ell)$,
\begin{align}
    \sum_{x_{\ell} \in \cX} \msmall(x_{\ell} \mid \context, x^{\ell - 1}) \cdot  \pab_{\ell} (x^\ell)
    & = \sum_{x_{\ell} \in \cX}  \min\{ \pab_{\ell-1}(x^{\ell-1})\mbig(x_{\ell}  \mid \context, x^{\ell-1}), \msmall(x_{\ell}  \mid \context, x^{\ell-1})\} \nonumber \\
    & = \sum_{x\in \cX}  \min\{ \pab_{\ell-1}(x^{\ell-1})\mbig(x  \mid \context, x^{\ell-1}), \msmall(x \mid \context, x^{\ell-1})\}. \label{eqn:sum_prod}
\end{align}

Plugging \cref{eqn:sum_prod} and \cref{eqn:prob_not_bot} into \cref{eqn:induction_sum}, we get
\[
    \probof{\tau \ge \ell - 1 \mid X^{\ell - 1} = x^{\ell - 1}} = \pab_{\ell-1}(x^{\ell-1}),
\]
as desired. The lemma hence follows by induction.

\subsection{Proof of \cref{lem:accept_prob_upper}} \label{sec:proof_lem_prob_upper}
Recall that we use $O^{\nt+1}$ to denote the sequence $(X^\tau, Y, Z^{\nt-\tau})$ in \cref{eqn:verify_goal}. \znew{Without loss of generality, we only consider $o^\ell$ such that $\probof{O^\ell = o^\ell} > 0$ and $\probof{X^\ell = o^\ell} > 0$ since otherwise  $\probof{\tau \ge i \mid X^i = x^i}$ is either zero or ill-defined. We break the proof into the two cases below.}

If $\forall i < \ell$, it satisfies that
$
    \pab_{i-1} (x^{i-1}) \mbig(x_{i} \mid \context, x^{i-1}) \le \msmall(x_{i} \mid \context, x^{i-1}),$
then we have in the recursive formula of $\pab_i$'s in \cref{alg:block_verify}, we always have
\[
    \pab_{i} (x^{i}) =  \pab_{i-1} (x^{i-1}) \frac{\mbig(x_{i} \mid \context, x^{i-1})}{\msmall(x_{i} \mid \context, x^{i-1})},
\]
and hence 
\[
  \pab_{\ell-1} (x^{\ell-1}) =  \frac{\mbig(x^{\ell-1} \mid \context)}{\msmall(x^{\ell-1} \mid \context)},
\]
And for $x^\ell$, we have 
\[
 \pab_{\ell} (x^{\ell}) = \min\{ \frac{\mbig(x^{\ell} \mid \context)}{\msmall(x^{\ell} \mid \context)}, 1\}.
\]
Note that %
\begin{align*}
        \probof{O^\ell = x^\ell, \tau \ge \ell}  %
         =  \probof{X^\ell = x^\ell} \probof{\tau \ge \ell \mid X^\ell = x^\ell}  = \msmall(x^{\ell} \mid \context) \probof{\tau \ge \ell \mid X^\ell = x^\ell} 
\end{align*}
\znew{Moreover, we have
\[
\probof{O^\ell = x^\ell, \tau \ge \ell} \le  \probof{O^\ell = x^\ell} = \mbig(x^{\ell} \mid \context),
\]
and 
\[
    \probof{O^\ell = x^\ell, \tau \ge \ell} =  \probof{X^\ell = x^\ell} \probof{\tau \ge \ell \mid X^\ell = x^\ell} \le  \probof{X^\ell = x^\ell} = \msmall(x^{\ell} \mid \context).
\]
Hence
\[
    \probof{\tau \ge \ell \mid X^\ell = x^\ell}  = \frac{\probof{O^\ell = x^\ell, \tau \ge \ell}}{\msmall(x^{\ell} \mid \context)} \le \min\{\frac{\mbig(x^{\ell} \mid \context)}{\msmall(x^{\ell} \mid \context)}, 1\}  = \pab_{\ell} (x^{\ell}).
\]}

In the other case, \znew{there must exist some $i$ such that $
    \pab_{i-1} (x^{i-1}) \mbig(x_{i} \mid \context, x^{i-1}) > \msmall(x_{i} \mid \context, x^{i-1}),$ then we have
    \[
        \pab_i(x^i) = \min \{\frac{\pab_{i-1} (x^{i-1}) \mbig(x_{i} \mid \context, x^{i-1})}{\msmall(x_{i} \mid \context, x^{i-1})}, 1 \} = 1.
    \]}

WLOG, let $i$ be the largest such index. \znew{In this case, we have $\forall i < j < \ell,
    \pab_{j-1} (x^{j-1}) \mbig(x_{j} \mid \context, x^{j-1}) \le  \msmall(x_{j} \mid \context, x^{j-1}),$ and hence}
\[
    \pab_\ell(x^\ell) = \pab_i(x^i) \frac{\mbig^{\ell-i}(x^{i+1:\ell } \mid \context, x^i)}{\msmall^{\ell-i}(x^{i+1:\ell } \mid \context, x^i)} = \frac{\mbig^{\ell-i}(x^{i+1:\ell } \mid \context, x^i)}{\msmall^{\ell-i}(x^{i+1:\ell } \mid \context, x^i)}.
\]
Moreover, by definition, we have
\znew{\[
\pab_{i-1} (x^{i-1}) \le \pab_{i-2} (x^{i-2})\frac{\mbig(x_{i-1} \mid \context, x^{i-2})}{\msmall(x_{i-1} \mid \context, x^{i-2})} \le \ldots \le \frac{\mbig^{i-1}(x^{i-1 } \mid \context)}{\msmall^{i-1}(x^{i-1 } \mid \context)},
\]
and hence when $
    \pab_{i-1} (x^{i-1}) \mbig(x_{i} \mid \context, x^{i-1}) > \msmall(x_{i} \mid \context, x^{i-1}),$
\[
    \frac{\mbig(x^i \mid \context)}{\msmall(x^i \mid \context)} =  \frac{\mbig^{i-1}(x^{i-1 } \mid \context)}{\msmall^{i-1}(x^{i-1 } \mid \context)} \cdot  \frac{\mbig(x_i \mid \context, x^{i-1 })}{\msmall(x_i \mid \context, x^{i-1})} \ge \pab_{i-1} (x^{i-1}) \cdot  \frac{\mbig(x_i \mid \context, x^{i-1 })}{\msmall(x_i \mid \context, x^{i-1})} > 1.
\]
}
Hence
\[
  \probof{O^i = x^i, \tau \ge i} =    \probof{X^i = x^i}  \probof{\tau \ge i \mid X^i = x^i}  \le \msmall(x^i  \mid \context) < \mbig(x^i  \mid \context),
\]
and 
\[
\probof{O^i = x^i, \tau < i} = \probof{O^i = x^i} - \probof{O^i = x^i, \tau \ge i} = \mbig(x^i  \mid \context) - \msmall(x^i  \mid \context )  > 0.
\]

Note that when $O^i = x^i, \tau < i$, by constraints in \cref{eqn:verify_goal}, we have
\[
    \probof{O^{i+1:\ell} = x^{i+1:\ell} \mid O^i = x^i, \tau < i} = \mbig^{\ell-i}(x^{i+1:\ell} \mid \context, x^i).
\]
This implies
\begin{align*}
  \probof{O^{\ell} = x^\ell} 
  & = \probof{O^i = x^i, \tau < i} \cdot \probof{O^{i+1:\ell} = x^{i+1:\ell} \mid O^i = x^i, \tau <  i} \\ & \quad \quad \quad \quad + \probof{O^i = x^i, \tau \ge  i} \probof{O^{i+1:\ell} = x^{i+1:\ell} \mid O^i = x^i, \tau \ge  i}\\
  & = \probof{O^i = x^i, \tau < i} \cdot \mbig^{\ell-i}(x^{i+1:\ell} \mid \context, x^i)  \\ 
 & \quad \quad \quad \quad + \probof{O^i = x^i, \tau \ge  i} \probof{O^{i+1:\ell} = x^{i+1:\ell} \mid O^i = x^i, \tau \ge  i}
\end{align*}
Moreover, we have
\[
     \probof{O^{\ell} = x^\ell} = \mbig(x^i) \mbig^{\ell-i}(x^{i+1:\ell} \mid \context, x^i) 
\]
Combining both, we get 
\begin{align*}
    1 & = \frac{\probof{O^{\ell} = x^\ell} }{\mbig(x^i) \mbig^{\ell-i}(x^{i+1:\ell} \mid \context, x^i)} \\
    & = \probof{\tau < i \mid O^i = x^i}  + \probof{\tau \ge i \mid O^i = x^i}  \frac{\probof{O^{i+1:\ell} = x^{i+1:\ell} \mid O^i = x^i, \tau \ge  i}}{\mbig^{\ell-i}(x^{i+1:\ell} \mid \context, x^i)}, \\
    & = 1 - \probof{\tau \ge i \mid O^i = x^i} \Paren{  \frac{\probof{O^{i+1:\ell} = x^{i+1:\ell} \mid O^i = x^i, \tau \ge  i}}{\mbig^{\ell-i}(x^{i+1:\ell} \mid \context, x^i)}-1},
\end{align*}
and this implies that \znew{(note by assumption $\probof{\tau \ge i \mid O^i = x^i} \neq 0$),}
\begin{equation} \label{eqn:dist_accepted_path}
    \probof{O^{i+1:\ell} = x^{i+1:\ell} \mid O^i = x^i, \tau \ge  i} = \mbig^{\ell-i}(x^{i+1:\ell} \mid \context, x^i). 
\end{equation}
Hence 
\begin{align*}
      \probof{O^\ell = x^\ell, \tau \ge \ell } & \le  \probof{O^\ell = x^\ell, \tau \ge i }  \\
     & = \probof{O^i = x^i, \tau \ge  i} \probof{O^{i+1:\ell} = x^{i+1:\ell} \mid O^i = x^i, \tau \ge  i} \\
     & \le \msmall(x^i\mid \context) \pab_i(x^i) \mbig^{\ell-i}(x^{i+1:\ell} \mid \context, x^i) \\
     & = \msmall(x^i\mid \context) \mbig^{\ell-i}(x^{i+1:\ell} \mid \context, x^i).
\end{align*}
\znew{If $\probof{\tau \ge \ell \mid X^\ell = x^\ell} > \pab_\ell(x^\ell)$, we have}
\begin{align*}
      \probof{O^\ell = x^\ell, \tau \ge \ell } & = \probof{X^\ell = x^\ell} \probof{\tau \ge \ell \mid X^\ell = x^\ell} \\
      & > \msmall(x^\ell \mid \context) \pab_\ell(x^\ell) \\
      & =\msmall(x^\ell \mid \context) \frac{\mbig^{\ell-i}(x^{i+1:\ell } \mid \context, x^i)}{\msmall^{\ell-i}(x^{i+1:\ell } \mid \context, x^i)} \\
      & = \msmall(x^i\mid \context) \mbig^{\ell-i}(x^{i+1:\ell} \mid \context, x^i),
\end{align*}
which leads to a contradiction. This completes the proof.
\section{Greedy Block Verification} \label{sec:block_verify_greedy}

In this section, we show that it is possible to accept more tokens than 
block verification (\cref{alg:block_verify}) \emph{in one iteration}
 with a modification to the speculative decoding framework in 
\cref{alg:speculative_decoding_framework} that allows the decoding 
logic to 
depend on the previous accept/reject decisions. 
 \znew{However, as shown in \cref{tab:exp_comparison_greedy}, the resulting algorithm, greedy block verification,  doesn't improve over block verification. We include the description and analysis of the algorithm as a theoretical result. The claims in the main paper holds independent of the results in this section.}

We start by introducing the algorithm 
(\cref{alg:block_verify_greedy}) and then discuss the necessary 
modifications to maintain the identity distribution 
guarantee.
\newcommand{\tpab}{\tilde{\pab}}
\begin{algorithm}[h]
	\caption{Greedy block verification}
	\label{alg:block_verify_greedy}
	\begin{algorithmic}[1]
		\REQUIRE{Draft block $X^{\nt}$; small model distributions $\forall i< 
		\nt, \msmall(\cdot \mid \context, X^{i})$; target model distributions 
		$\forall i \le \nt, \mbig(\cdot \mid \context, X^{i})$.}
		\STATE Sample $\eta_1, \ldots, \eta_{\nt} \sim U(0, 1)$.
		\STATE Set $\tau = 0$, $\pab_0 = 1$.
		\FOR{$i = 1, \ldots,$ \diff{$\nt-1$}}
		\STATE \diff{Set $\tpab_i = \tpab_{i-1}\frac{\mbig(X_{i} \mid 
		\context,  X^{i-1})}{\msmall(X_{i} \mid \context, X^{i-1})}$.}
		\STATE Set \diff{$h_i = \frac{\sum_x\max\{\tpab_i \mbig(x \mid 
		\context, X^i) 
		- 
		\msmall(x \mid \context, X^i), \, 0\}}{\sum_x\max\{\msmall(x \mid 
		\context, X^i) - 
		\tpab_i  \mbig(x \mid \context, X^i), \, 
		0\}}$}
		\IF{$\eta_i \le h_i$}
		\STATE Set $\tau = i$.
		\ELSE
		\STATE \textbf{continue.} %
		\ENDIF
		\ENDFOR
		\STATE \diff{$\tpab_\nt = \tpab_{\nt-1}\frac{\mbig(X_{\nt} \mid 
			\context,  X^{\nt-1})}{\msmall(X_{\nt} \mid \context, X^{\nt-1})}$}
		\IF{\diff{$\eta_{\nt} < \tpab_\nt$}}
		\STATE Set $\tau = \nt$, and sample $Y$ from $\mbig(\cdot \mid 
		\context, X^{\nt})$.
		\ELSE
		\STATE Sample $Y$ from \diff{$\pres^{\rm greedy}(\cdot \mid 
		\context, 
		X^{\tau})$} as below:
		\begin{equation}
		\pres^{\rm greedy}(x \mid \context, X^{i}) = \frac{\max\{ 
			\diff{$\tpab_{i}$} \cdot \mbig(x \mid \context, X^{i}) - \msmall(x 
			\mid \context, X^{i}), 0 \}}{\sum_{x'\in\cX} \max\{ \diff{$\tpab_{i}$} 
			\cdot \mbig(x' \mid \context, X^{i}) - \msmall(x' \mid \context, 
			X^{i}), 0 \}}. \label{eqn:pres_greedy}
		\end{equation}
		\ENDIF
		\STATE \textbf{Return} $X^\tau, Y$.
	\end{algorithmic}
\end{algorithm}

The above greedy block verification algorithm has a similar procedure as 
block verification (\cref{alg:block_verify}) with differences in the setting of 
of acceptance probabilities and residual distributions, as highlighted.

Similar to \cref{alg:block_verify}, \cref{alg:block_verify_greedy} maintains a 
list of probabilities $\tpab_i$'s, which satisfies that $\min\{1, \tpab_i\}$ is 
the probability that the subblock $X^i$ is accepted. $h_i$'s are chosen to achieve the above acceptance 
guarantee, and $\pres^{\rm greedy}$'s are chosen to maintain the 
identical 
distribution guarantee. 

Note that compared to $\pab_i$ in block verification, the recursive 
definition of 
$\tpab_i$ doesn't have a 
minimum over one term, hence it 
is always an upper bound on $\pab_i$'s. This leads to a higher acceptance 
probability for every subblock in greedy block verification 
(\cref{thm:greedy_one_iter}). However, 
\cref{alg:block_verify_greedy} cannot be used directly in the iterative 
implementation of speculative decoding in 
\cref{alg:speculative_decoding_framework}. To see this, consider the 
simple example in \cref{sec:example}. Greedy block verification will 
perform the following:

Accept $X_1X_2 = \a\b, \b\a, \b\b$ with probability one, and sample an 
extra token from $\mbig(\cdot).$ Accept 
$X_1X_2 = \a\a$ 
with 
probability 1/4 and sample an extra token from $\mbig(\cdot).$ When  
$X_1X_2 = \a\a$  is rejected,  accept no tokens and sample a correction 
token $Y = \b$. Note that in this case, if the algorithm uses $Y$ as the 
context for the next iteration and sample based on 
$\mbig$, the next token 
will be $\a$ with probability $1/3$. This makes the total probability of 
generating $\b\a$ as the first two tokens
\begin{align*}
\msmall(\b\a) \probof{\text{Accept } \b\a} + \msmall(\a\a)  
\probof{\text{Reject } \a\a \text{ and } Y= \b}  \mbig(\a) & = 2/9 
\cdot 
1 + 4/9 \cdot 3/4 \cdot 1/3
\\
& = 1/3,
\end{align*}
which is higher than $\mbig(\b \a) = 2/9$. This violates the  identical 
distribution guarantee. Below we introduce a distribution 
modification algorithm, which can be used with 
\cref{alg:block_verify_greedy} to maintain the identical distribution 
guarantee.

\newcommand{\mnew}{\mathcal{M}_{\rm new}}
\begin{algorithm}
	\caption{Distribution modification}
	\label{alg:distribution_modify_greedy}
	\begin{algorithmic}[1]
		\REQUIRE{Small model $\msmall$; target model $\mbig$; draft length 
		$\gamma$; generated tokens from 
		\cref{alg:block_verify_greedy} $X^\tau, Y$.}
	\STATE Let $\mbig'$ be such that $\forall  i \le \gamma - \tau - 1$, and 
	$x^i \in \cX^i$, $\mnew(x_i \mid \context, X^\tau, Y, x^{i-1}) =$
	\begin{equation} \label{eqn:model_new}
		 \frac{\max\{ 
			\mbig(\context, X^\tau, Y, x^{i} ) - \msmall(\context, X^\tau, Y, 
			x^{i} ), 0 \}}{\sum_{x'\in\cX} \max\{ 
			\mbig(\context, X^\tau, Y, x^{i-1}, x' ) - \msmall(\context, X^\tau, 
			Y, 
			x^{i-1}, x' ), 0 \}},
	\end{equation}
\hfill \coloredcomment{Modify the distribution at rejected locations.} \\
	and $\forall i> \gamma - \tau - 1$, and 
	$x^i \in \cX^i$,	
	\[
		\mnew(x_i \mid \context, X^\tau, Y, x^{i-1})  = \mbig(x_i \mid 
		\context, X^\tau, Y, x^{i-1}) 
	\]
	\hfill \coloredcomment{Keep the distributions for future locations 
	unchanged.} 
	\STATE \textbf{Return} $\mnew$.
	\end{algorithmic}
\end{algorithm}

It can be shown that if $X^\tau, Y$ are returned in 
\cref{alg:block_verify_greedy}, and the next $\tau - 
\nt - 1$ tokens are sampled 
according to $\mnew$ from \cref{alg:distribution_modify_greedy}, the 
identical distribution 
guarantee is maintained. In particular, we have the following lemma:
\begin{lemma} \label{lem:greedy_block_identical}
	Let $X^\gamma \sim \msmall^\gamma(\cdot \mid \context)$ be the 
	draft tokens and $X^\tau, Y$ be the output from  
	\cref{alg:block_verify_greedy}. Let $\mnew$ be the modified distribution 
	based on \cref{alg:distribution_modify_greedy}, and 
	$Z^{\gamma-\tau-1} \sim \mnew^{\gamma-\tau-1}(\cdot \mid 
	\context, X^\tau, Y)$. Then we have
	\[
		X^\tau, Y, Z^{\gamma - \tau-1} \sim \mbig^\gamma(\cdot \mid 
		\context).
	\]
\end{lemma}

The proof is presented in \cref{proof:greedy_optimal}. The above leads to 
the following speculative decoding algorithm with 
greedy block verification, presented in 
\cref{alg:speculative_decoding_framework_greedy}. Note that 
\cref{lem:greedy_block_identical} implies that it maintains the identical distribution guarantee.

\begin{algorithm}[h]
	\caption{Speculative decoding with greedy block verification}
	\label{alg:speculative_decoding_framework_greedy}
	\begin{algorithmic}[1]
		\REQUIRE{Prefix $\context$, large model $\mbig$, draft model 
		$\msmall$. Draft length $\nt$.. }
		\WHILE{$\eos \notin (X^\tau, Y)$}
		\STATE Sample $X_1, \ldots, X_{\nt} \sim  \msmall(\cdot \mid 
		\context)$ using autoregressive sampling, keep the conditional\\
		probabilities at each step $\msmall(\cdot \mid \context, X^{i})$ for $i 
		= 0, \ldots, {\nt-1}$. \hfill \coloredcomment{Obtain draft block.}
		\STATE Call the large model $\mbig$ and compute conditional 
		probabilities $\mbig(\cdot \mid \context, X^{i})$\footnotemark\\
		for $i = 0, 1, \ldots, \nt$ in parallel. \hfill \coloredcomment{Parallel 
		scoring.}
		\STATE Get the accepted tokens with draft verification \hfill 
		\coloredcomment{Draft verification and correction.}
		\[
		X^\tau, Y = \draftverify(X^{\nt}, \{\msmall(\cdot \mid \context, X^{i}) 
		\}_{i = 0}^{\nt-1}, \{\mbig(\cdot \mid \context, X^{i}) \}_{i = 0}^{\nt}).
		\]
		\STATE $\context \leftarrow \context, X^\tau, Y.$ \hfill
		\coloredcomment{Add decoded tokens to the prefix.}
		\STATE \diff{$\mbig \leftarrow \textsc{DistributionModify}(\mbig, 
		\msmall, \gamma, X^\tau, Y)$} \hfill
		\coloredcomment{Modify target distribution.}
		\ENDWHILE
	\end{algorithmic}
\end{algorithm}

\subsection{Comparison to block verification.} In one draft iteration, with 
the same pair of draft and target distributions, greedy block verification is 
always better.
\begin{theorem}[Informal] \label{thm:greedy_one_iter}
	In one draft 
	iteration with the same models $\msmall, \mbig$ and draft length 
	$\gamma$,  greedy block verification 
	decodes at least as many tokens as block verification.
\end{theorem}

The theorem is proved in \cref{proof:greedy_optimal}. However, due to the 
distribution modification step, the target distribution
might change after the first iteration, which might affect the expected 
number of accepted tokens. For example, in the Bernoulli example
considered in \cref{sec:example}, when the draft block $X_1X_2=\a\a$ and 
they 
are rejected by greedy block verification. It can be shown that the modified 
distribution will be a point mass on token $\b$. And in future iterations, 
if the algorithm still uses $\msmall$ as the draft model, there is lower 
chance that the draft tokens will be accepted. Hence, theoretically it is 
unclear 
whether one approach dominates the other.

\footnotetext{{In cases 
		where $\mbig$ is not the original large transformer model. $\mbig$ 
		can be obtained by evaluating using the original large model, and then 
		perform the modification in \cref{eqn:model_new}.}}
\textbf{Empirical comparison.} We conduct the same set of experiments in 
\cref{sec:experiments} on 
greedy 
block verification to compare the two approaches empirically. We list the 
block efficiency comparison when PALM-2-XXS is used as the drafter and 
$\nt = 8$ in \cref{tab:exp_comparison_greedy}. As we can see, while 
greedy block verification still consistently improves over token verification, 
the improvement is less significant compared to block verification. The 
trend is the same for wall clock numbers as well as in other 
parameter settings. Hence 
we recommend using block verification instead of the greedy 
version. 

\begin{table*}[t]
	\caption{Block efficiency comparison between among token verification, 
	block verification, and greedy block verification with $\nt = 8$. Each 
	statistic is computed 
	using $1000$ test prompts from different datasets on various tasks 
	(each run is an average with 3 different random seeds).}
	\setlength{\tabcolsep}{2pt}
	\begin{center}
		\begin{tabular}{  c c c c }
			\toprule
			Dataset 
			&  Token Verification & Block verification & Greedy block 
			verification   \\ 
			\midrule
LM1B & $3.21$ & $\mathbf{3.49}$ & $3.30$\\
GPT Prompt & $3.41$ & $\mathbf{3.76}$ & $3.51$ \\
WebQA & $3.44$ & $\mathbf{3.70}$ & $3.52$\\
PIQA & $3.40$ & $\mathbf{3.68}$ & $3.49$  \\
ShareGPT & $3.34$ & $\mathbf{3.62}$ &$3.44$ \\
XSum & $3.49$ & $\mathbf{3.76}$ & $3.59$ \\
GSM8K & $3.81$ & $\mathbf{4.15}$  & $3.96$ \\
WMT-DeEn & $3.19$ & $\mathbf{3.41}$ &   $3.26$\\
			\bottomrule
		\end{tabular}
	\end{center}
	\label{tab:exp_comparison_greedy}
\end{table*}

\subsection{Proof of 
\cref{lem:greedy_block_identical}}\label{proof:greedy_dist_mathc}
In the proof, we ignore the context $\context$ and the proof will generalize 
to arbitrary $\context$. We start by introducing two useful quantities.

\begin{align}
	\premain(x^{i}) & \eqdef \sum_x \max\{\cM_b(x^{i}, x) - \cM_s(x^{i}, 
	x), 0\}, \label{eqn:premain_block}\\
	\prej(x^{i}) & \eqdef \sum_x \max\{\cM_s(x^{i}, x) - \cM_b(x^{i}, x), 0\}
	\label{eqn:prej_block}.
\end{align}

Note that $\pab_i$ in \cref{alg:block_verify_greedy} depends on the draft 
block $x^i$, and by the recursive definition of $\tpab_i$'s, we have
$
\tpab_i = \frac{\mbig(x^i)}{\msmall(x^i)}.$ Hence we have
\[
	h_i = \frac{\sum_x\max\{\tpab_i \mbig(x \mid 
		\context, X^i) 
		- 
		\msmall(x \mid \context, X^i), \, 0\}}{\sum_x\max\{\msmall(x \mid 
		\context, X^i) - 
		\tpab_i  \mbig(x \mid \context, X^i), \, 
		0\}} = \frac{	\premain(x^{i}) }{	\prej(x^{i}) }.
\]
Moreover, the expression for $\tpab_i$ also implies that $\pres^{\rm greedy}(\cdot \mid \context, X^\tau) = \mnew(\cdot \mid \context, X^\tau)$ (defined in \cref{eqn:pres_greedy} and \cref{eqn:model_new} resepectively).

	We now prove the following lemma about the acceptance length 
	$\tau$ in \cref{alg:block_verify_greedy}.
	\begin{lemma}\label{lem:block-accept}
		For all $\ell \in [1, \nt]$, and $x^\ell \in \cX^\ell$,
		\[
		\probof{X^{\ell} = x^\ell,\tau \ge \ell} = \min\{\mbig(x^\ell), 
		\msmall(x^\ell)\}.
		\]
	\end{lemma}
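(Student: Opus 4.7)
My plan is to prove the lemma by backward induction on $\ell$, running from $\ell = L$ down to $\ell = 1$. The one subtlety that I would isolate as an auxiliary claim is the survival probability of the backward rejection loop in Lines 5--11 of \cref{alg:sequence-coupling}.

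For the base case $\ell = L$, the event $\{\tau \ge L\}$ coincides with $\{\tau = L\}$, which by construction occurs exactly when $X^L \sim \cM_s^L$ realizes to $x^L$ (contributing a factor $\cM_s(x^L)$) and the initial rejection sampling step at Line 3 accepts (contributing $\min\{\cM_b(x^L)/\cM_s(x^L), 1\}$). Multiplying gives $\min\{\cM_s(x^L), \cM_b(x^L)\}$ immediately.

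For the inductive step at $\ell < L$, I would decompose
\[
\probof{X^\ell = x^\ell, \tau \ge \ell} = \sum_{x_{\ell+1}} \probof{X^{\ell+1} = (x^\ell, x_{\ell+1}), \tau \ge \ell+1} + \probof{X^\ell = x^\ell, \tau = \ell}.
\]
The inductive hypothesis collapses the first summand to $\sum_{x_{\ell+1}} \min\{\cM_s(x^\ell,x_{\ell+1}), \cM_b(x^\ell,x_{\ell+1})\}$, which using $\min\{a,b\} = b - (b-a)_+$ and the definition of $\premain$ becomes $\cM_b(x^\ell) - \premain(x^\ell)$ (equivalently $\cM_s(x^\ell) - \prej(x^\ell)$). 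For the second summand, I would establish the following auxiliary claim by forward induction on $i$: for $i \ge 1$,
\[
\probof{X^{L-i} = x^{L-i}, \text{ rejected at iterations } 0,1,\ldots,i-1} = \prej(x^{L-i}).
\]
The $i=1$ case follows by summing $\cM_s(x^L)\cdot (1 - \min\{\cM_b(x^L)/\cM_s(x^L),1\}) = (\cM_s(x^L)-\cM_b(x^L))_+$ over $x_L$. The inductive step combines the hypothesis with the rejection probability at iteration $i$, $(\prej - \premain)_+/\prej$, and then exploits the telescoping identity $\prej(y) - \premain(y) = \cM_s(y) - \cM_b(y)$ (which holds by marginalizing out the last coordinate in $(\cM_s-\cM_b)_+ - (\cM_b-\cM_s)_+ = \cM_s - \cM_b$) to recover $\prej$ one level up. Applying the auxiliary claim with $i = L-\ell$ and multiplying by the acceptance probability $\min\{\premain(x^\ell)/\prej(x^\ell),1\}$ at that iteration gives $\probof{X^\ell = x^\ell, \tau = \ell} = \min\{\premain(x^\ell), \prej(x^\ell)\}$.

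Finally I would add the two summands and split into cases: when $\cM_b(x^\ell) \le \cM_s(x^\ell)$ the same telescoping identity gives $\premain(x^\ell) \le \prej(x^\ell)$, so the second summand equals $\premain(x^\ell)$ and the total collapses to $\cM_b(x^\ell)$; in the opposite case the second summand equals $\prej(x^\ell)$ and the total telescopes to $\cM_b(x^\ell) - (\cM_b(x^\ell) - \cM_s(x^\ell)) = \cM_s(x^\ell)$. In both cases we obtain $\min\{\cM_s(x^\ell), \cM_b(x^\ell)\}$, closing the induction.

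The main obstacle is the auxiliary survival claim and keeping the indexing straight: at loop iteration $i$ the algorithm examines the prefix $X^{L-i}$, so the induction on $i$ walks the prefix length down from $L-1$ to $0$ while keeping $\prej$ as the invariant. Once that invariant is in place the rest of the proof is routine algebra driven entirely by the single identity $\prej - \premain = \cM_s - \cM_b$.
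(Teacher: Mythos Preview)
Your proof is correct and follows the same backward-induction skeleton as the paper: same base case at $\ell=L$, same decomposition of $\probof{X^\ell=x^\ell,\tau\ge\ell}$ into the $\{\tau\ge\ell+1\}$ and $\{\tau=\ell\}$ pieces, and the same final algebra combining $\sum_x\min\{\cM_b(x^\ell,x),\cM_s(x^\ell,x)\}$ with $\min\{\premain(x^\ell),\prej(x^\ell)\}$. The one place you diverge is in how you compute $\probof{X^\ell=x^\ell,\tau=\ell}$: you set up a separate forward induction on the loop index $i$ to establish the survival invariant $\probof{X^{L-i}=x^{L-i},\text{rejected at }0,\ldots,i-1}=\prej(x^{L-i})$. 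The paper avoids this auxiliary claim entirely by observing that the needed quantity is simply the complement
\[
\probof{X^{\ell}=x^{\ell},\tau\le\ell}=\cM_s(x^{\ell})-\probof{X^{\ell}=x^{\ell},\tau\ge\ell+1}=\cM_s(x^{\ell})-\sum_x\min\{\cM_b(x^{\ell},x),\cM_s(x^{\ell},x)\}=\prej(x^{\ell}),
\]
which is already available from the outer backward-induction hypothesis (it is precisely the first summand you just computed, subtracted from the marginal). Your forward induction re-derives the same identity from scratch via the telescoping $\prej-\premain=\cM_s-\cM_b$; it is sound, but redundant once the backward induction is in place. The paper's route is shorter by one lemma, while yours makes the survival probability of the rejection loop explicit as a standalone fact, which has some expository value.
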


	\begin{proof}
		We prove this by induction in the backward direction. When $\ell = 
		\nt$, Step~12-14 in 
		\cref{alg:block_verify_greedy} accepts $X^\nt$ with probability 
		\[
			\probof{\tau = \nt \mid X^\nt = x^\nt} = \min\{1, \tpab_{\nt}\} = 
			\min \left \{1, 
			\frac{\mbig(x^\nt)}{\msmall(x^\nt)} \right\},
		\] 
		and hence
		\begin{align*}
			\probof{X^\nt = x^\nt,\tau \ge \nt}  & =  \probof{X^\nt = x^\nt}  
			\probof{\tau = \nt \mid X^\nt = x^\nt}
			= \min \left \{\msmall(x^\nt) , \mbig(x^\nt) \right\}.
		\end{align*}
		
		Suppose the equation holds for $\ell \ge \ell_0$, for $\ell = \ell_0 - 
		1$, we have
		
		\begin{align*}
			\probof{X^{\ell_0-1}  = x^{\ell_0-1}, \tau \ge \ell_0-1} 
			= \probof{X^{\ell_0-1} = x^{\ell_0 - 1}, \tau \ge \ell_0} + 
			\probof{X^{\ell_0-1} = x^{\ell_0-1}, \tau =  \ell_0-1}.
		\end{align*}
		
		Next we consider the two terms separately. For the first term, due to 
		the induction assumption, we have
		\[
		\probof{X^{\ell_0-1} = x^{\ell_0 - 1}, \tau \ge \ell_0} = \sum_{x \in 
		\cX} \min\{\cM_b(x^{\ell_0 -1}, x), \cM_s(x^{\ell_0 - 1}, x)\}
		\]
		For the second term, we have
		\begin{align}
			& \;\;\;\; \probof{X^{\ell_0-1} = x^{\ell_0-1}, \tau =  \ell_0-1} \nonumber \\
			& = \probof{X^{\ell_0-1} = x^{\ell_0-1}, \tau \le \ell_0-1} \cdot 
			\probof{X^{\ell_0-1} \text{ is accepted.}} 
			\label{eqn:backward_rejection} \\
			& = \probof{X^{\ell_0-1} = x^{\ell_0-1}, \tau \le \ell_0-1} \cdot 
			\probof{\eta_{\ell_0-1} \le h_{\ell_0-1}} \nonumber \\
			& = \Paren{\probof{X^{\ell_0-1} = x^{\ell_0-1}} - 
			\probof{X^{\ell_0-1} = x^{\ell_0-1}, \tau \ge \ell_0}} \cdot 
			\min\left\{1,  
			\frac{\premain(x^{\ell_0-1})}{\prej(x^{\ell_0-1})}\right\} \nonumber 
			\\
			& = \sum_{x\in \cX} \Paren{\cM_s(x^{\ell_0 - 1}, x) - 
			\min\{\cM_b(x^{\ell_0 -1}, x), \cM_s(x^{\ell_0 - 1}, x)\}} \cdot 
			\min\left\{1,  
			\frac{\premain(x^{\ell_0-1})}{\prej(x^{\ell_0-1})}\right\} 
			\label{eqn:induction}\\
			& = \prej(x^{\ell_0-1}) \cdot \min\left\{1,  
			\frac{\premain(x^{\ell_0-1})}{\prej(x^{\ell_0-1})}\right\}  
			\label{eqn:p_rej_def}\\
			& = \min \left\{\premain(x^{\ell_0-1}), \prej(x^{\ell_0-1}) \right\}. \nonumber
		\end{align}
		In the above derivation, \cref{eqn:backward_rejection} is due to that \cref{alg:block_verify_greedy} is outputing the longest accepted subblock. 
		\cref{eqn:induction} is due to the induction hypothesis. 
		\cref{eqn:p_rej_def} is due to the definition of $\prej$ in 
		\cref{eqn:prej_block}.
		
		Combining the two terms, we have
		\begin{align}
			& \;\;\; \,\, \probof{X^{\ell_0-1}  = x^{\ell_0-1}, \tau \ge \ell_0-1} 
			\nonumber \\
			& = \probof{X^{\ell_0-1} = x^{\ell_0 - 1}, \tau \ge \ell_0} + 
			\probof{X^{\ell_0-1} = x^{\ell_0-1}, \tau =  \ell_0-1}  \nonumber  
			\\
			& = \sum_{x \in \cX} \min\{\cM_b(x^{\ell_0 -1}, x), 
			\cM_s(x^{\ell_0 - 1}, x)\} + \min \left\{\premain(x^{\ell_0-1}), 
			\prej(x^{\ell_0-1}) \right\} \nonumber  \\
			& = \min \{ \sum_{x \in \cX} \min\{\cM_b(x^{\ell_0 -1}, x), 
			\cM_s(x^{\ell_0 - 1}, x)\}  +\premain(x^{\ell_0-1}),  \nonumber  \\
			& \qquad \qquad \qquad \sum_{x \in \cX} \min\{\cM_b(x^{\ell_0 
			-1}, x), \cM_s(x^{\ell_0 - 1}, x)\}  +\prej(x^{\ell_0-1}) \} 
			\nonumber  \\
			& = \min \left\{ \sum_{x \in \cX}\cM_b(x^{\ell_0 -1}, x),\sum_{x 
			\in \cX}\cM_s(x^{\ell_0 -1}, x) \right\} 
			\label{eqn:derive_premain_prej}\\
			& =  \min \left\{ \cM_b(x^{\ell_0 -1}),\cM_s(x^{\ell_0 -1}) \right\}, 
			\nonumber 
		\end{align}
		which is the desired quantity in the lemma. This concludes the proof. 
		Here \cref{eqn:derive_premain_prej} is due to the definition of 
		$\premain$ and $\prej$ in \cref{eqn:premain_block} and 
		\cref{eqn:prej_block}.
	\end{proof}
	
	Next we proceed to prove \cref{lem:greedy_block_identical}. Let $O^\nt = 
	(X^\tau, Y, Z^{\nt - \tau - 1})$. It would be enough to show that for all $i \le \nt$ 
	and $x^i \in \cX^i$,
	\[
	\probof{O^i = x^{i}} = \cM_b(x^{i}).
	\]
	
	We prove this via induction. Note that the corollary holds for $i = 0$, 
	which is the trivial case and both sides are equal to 1.
	Suppose the claim holds for $i \le i_0$. This means $\forall x^{i_0} \in 
	\cX^{i_0}$, we have
	\[
	\probof{O^{i_0} = x^{i_0}} = \cM_b(x^{i_0}).
	\]
	
	When $i = i_0+1$, by the algorithm, we have that either $\tau \ge i_0+1$, where $O_{i_0+1}$ is 
	an accepted token, or $\tau \le i_o$, where $O_{i_0+1}$ is sampled according to $\mnew(\cdot \mid O^{i_0})$ (or $\pres^{\rm greedy}(\cdot \mid O^{i_0})$, which is the same as $\mnew(\cdot \mid O^{i_0})$). By \cref{lem:block-accept}, we have
	\begin{align}
		& \;\;\;\; \probof{O^{i_0 + 1} = x^{i_0 + 1}} \nonumber \\
		& = \probof{O^{i_0 + 1} = x^{i_0 + 1}, \tau \ge i_0 +1} + 
		\probof{O^{i_0 + 1} = x^{i_0 + 1}, \tau \le i_0} \nonumber  \\
		& = \probof{X^{i_0 + 1} = x^{i_0 + 1}, \tau \ge i_0 +1}   +
		\probof{O^{i_0} = x^{i_0}, \tau \le i_0} \cdot \mnew(x_{i_0 + 1} \mid x^{i_0}) \nonumber  \\
		& = \probof{X^{i_0 + 1} = x^{i_0 + 1}, \tau \ge i_0 +1} \nonumber \\
		& \quad \quad  \quad \quad  \quad \quad  \quad \quad + 
		\Paren{\probof{O^{i_0} = x^{i_0}} - \probof{O^{i_0} = x^{i_0}, \tau \ge 
		i_0 +1}} \cdot \mnew(x_{i_0 + 1} \mid x^{i_0})) \nonumber \\
		& = \min\{ \cM_b(x^{i_0 + 1}), \cM_s(x^{i_0 + 1})\} \nonumber \\
		& \quad \quad  \quad \quad  \quad \quad  \quad \quad + 
		\Paren{\cM_b(x^{i_0}) - \sum_x \min\{ \cM_b(x^{i_0 }, x), 
		\cM_s(x^{i_0}, x)\} } \cdot \mnew(x_{i_0 + 1} \mid x^{i_0})) 
		\label{eqn:lemma_dist_match_induction}\\
		& = \min\{ \cM_b(x^{i_0 + 1}), \cM_s(x^{i_0 + 1})\}  +  \sum_x 
		\max\{\cM_b(x^{i_0}, x)- \cM_s(x^{i_0}, x), 0\} \cdot 
		\mnew(x_{i_0 + 1} \mid x^{i_0})) \nonumber\\
		& = \min\{ \cM_b(x^{i_0 + 1}), \cM_s(x^{i_0 + 1})\} + 
		\max\{\cM_b(x^{i_0}, x_{i_0+1})- \cM_s(x^{i_0}, x_{i_0+1}), 0\}
		\label{eqn:lemma_dist_match_pres_def}\\
		& = \cM_b(x^{i_0 + 1}). \nonumber
	\end{align}
	Here \cref{eqn:lemma_dist_match_induction} follows by the induction 
	hypothesis, and \cref{eqn:lemma_dist_match_pres_def} follows by the 
	definition of $\mnew$ in \cref{eqn:model_new}. By 
	induction, this concludes the proof.

\subsection{Proof of 
\cref{thm:greedy_one_iter}}\label{proof:greedy_optimal}

To prove \cref{thm:greedy_one_iter}, we first observe the following: For \cref{alg:block_verify_greedy}, let $\tau$ be the number of accepted tokens, due to \cref{lem:block-accept}, we have 
\begin{align*}
    \EE_{X^\nt \sim \msmall^\nt}[\tau] & = \sum_{\ell = 1}^\nt \probof{\tau \ge \ell} = \sum_{\ell = 1}^\nt \sum_{x^\ell \in \cX^\ell} \probof{X^\ell = x^\ell, \tau \ge \ell} \\
    & = \sum_{\ell = 1}^\nt \sum_{x^\ell \in \cX^\ell} \min\{\cM_s(x^\ell), \cM_b(x^\ell)\}.
\end{align*}

Next we show that the above expected acceptance length is optimal for a family of draft verification algorithms that performs a coupling between sample blocks from the draft and target distributions. For all $x^\nt, y^\nt\in \cX^\nt$, let the \emph{maximum common prefix length} be defined as $$
    \acclength(x^\nt, y^\nt) \eqdef \max_{\ell \le \nt} \{\forall i \le \ell, x_i = y_i\}.$$ 
Formally, let $\pi$ be a joint distribution over $\cX^\nt \times \cX^\nt$, then \cref{alg:block_verify_greedy} solves the following optimization problem.
\begin{align}
    \max_{\pi}  \EE_{X^\nt, y^\nt \sim \pi} \left[\acclength(X^\nt, y^\nt) \right], \label{eqn:sequence-optimization}
    \end{align}
 subject to constraints
    \begin{align}
     \sum_{y^\nt} \pi(x^\nt, y^\nt) & =  \cM_s(x^\nt), \;\; \forall x^\nt \in \cX^\nt,  \label{eqn:left_margin}
     \\
      \sum_{x^\nt} \pi(x^\nt,y^\nt) & =  \cM_b(y^\nt), \;\; \forall y^\nt \in \cX^\nt. \label{eqn:right_margin}
\vspace{-.1in}
\end{align}

    In this above formulation, the marginal distributions satisfy $X^\nt \sim \msmall^\nt$ and $Y^\nt \sim \mbig^\nt$. And the maximum common prefix length refers to the number of accepted tokens in one iteration of speculative decoding.
Note that the optimization problem can be viewed as an optimal transport problem~\citep{villani2009optimal} between distributions $\cM_s^\nt(\cdot)$ and $\cM_b^\nt(\cdot)$ with the cost function being $(\nt - \acclength(X^\nt, Y^\nt)$. The next lemma establishes the optimality of \cref{alg:block_verify_greedy} in solving this problem.

\begin{lemma} \label{lem:upper_bound}
The solution to \cref{eqn:sequence-optimization} is upper bounded by
\begin{align*}
     \sum_{\tau = 1}^\nt \sum_{x^\tau  \in \cX^\tau} \min\{\cM_s(x^\ell), \cM_b(x^\ell)\} 
\end{align*}
\end{lemma}

\begin{proofof}{\cref{lem:upper_bound}}
For all $\pi$ that satisfies \cref{eqn:left_margin,eqn:right_margin} and $X^\nt, Y^\nt \sim \pi$, we have
\begin{align*}
 \EE_{X^\nt, Y^\nt}[\acclength(X^\nt, Y^\nt)] %
     & \stackrel{(a)}{=} \sum_{\ell \le \nt} \text{Pr}_{X^\nt, Y^\nt} \left(\acclength(X^\nt, Y^\nt) \ge \ell\right)\\
     & = \sum_{\ell \le \nt} \sum_{x^\ell} \text{Pr}_{X^\nt, Y^\nt } \left(X^{\ell} = Y^{\ell} =x^\ell, \acclength(X^\nt, Y^\nt) \ge \ell\right) \\
     & \le \sum_{\ell \le \nt} \sum_{x^\ell} \text{Pr}_{X^\nt, Y^\nt } \left(X^{\ell} = Y^{\ell} =x^\ell \right) \\
     & \stackrel{(b)}{\le} \sum_{\ell \le \nt} \sum_{x^\ell} \min\left\{  \text{Pr} \left(X^{\ell} = x^\ell\right), \text{Pr} \left(Y^{\ell} =x^\ell \right) \right\}\\
     & = \sum_{\ell \le \nt} \sum_{x^\ell} \min\{ \cM_s^\ell(x^\ell), \cM_b^\ell(x^\ell)\}.
\end{align*}
Here $(a)$ follows from the fact that for a positive integer random variable $\EE[X] = \sum_{i} \Pr(X \geq i)$; $(b)$ follows from the fact that the joint probability is upper bounded by the minimum of the marginals.
\end{proofof}

Then \cref{thm:greedy_one_iter} holds by noticing that block verification \cref{alg:block_verify} is also an instance of the coupling by setting $X^\nt$ to be the draft tokens and $Y^\nt = (X^\tau, Y', Z^{\nt - \tau - 1})$ where $(X^\tau, Y')$ are the outputs from \cref{alg:block_verify}  and $Z^{\nt - \tau - 1} \sim \mbig(\cdot \mid \context, X^\tau, Y)$ (\cref{lem:verify_goal}).
\section{Additional Experimental Results} \label{sec:experiments_app}

\subsection{Comparison to speculative decoding with multiple drafts.}\label{app:multi-draft}

\znew{Recent works \citep{sun2023spectr, miao2023specinfer} have extended speculative decoding to the case with multiple draft blocks to improve block efficiency. However, these methods also increase the required computation from the large model to verify the drafts. In high-throughput LLM serving systems, query batching \citep{kwon2023batching} is a common technique where multiple prefixes are decoded at the same time. In these cases, the inference will be less memory bound and there will not be enough extra parallel compute to evaluate the increased number of drafts without decreasing latency.}

\znew{We empirically compare block verification and SpecTr \citep{sun2023spectr}, SpecInfer \citep{miao2023specinfer} with query batching. We set the batch size $B=8$ and use PALM-2-XXS as the draft model. In \cref{tab:multi_draft}, we list the wall clock speedup and block efficiency for $\nt = 8$. The number of draft blocks for SpecTr and SpecInfer are taken to be 2, which is the one that achieves the lowest latency over $\{2, 4, 8\}$ when $B = 8, \nt = 8$.}

\begin{table*}[h]
\caption{$\nt = 8$, $B = 8$. Speedup comparison between token verification (\textsc{TokenV}) and block verification (\textsc{BlockV}) with PALM-2-XXS as the draft model on various datasets and tasks.}
 \setlength{\tabcolsep}{2pt}
\begin{center}
\begin{tabular}{  c c c c c c c c c}
\toprule
\multirow{2}{*}{Dataset} &  \multicolumn{4}{c}{Wall clock time  over baseline} & \multicolumn{4}{c}{Block efficiency} \\
\cmidrule(lr){2-5} \cmidrule(lr){6-9} 
& \textsc{TokenV} & \textsc{BlockV}   & SpecTr & SpecInfer & \textsc{TokenV} & \textsc{BlockV}  & SpecTr & SpecInfer\\ 
 \midrule
GPT Prompt & 1.300 & \textbf{1.381} & 1.290 & 1.263  & 3.394  & 3.715 & \textbf{3.898} & 3.833\\
WebQA & 1.302 & \textbf{1.368} & 1.279  & 1.274& 3.451 & 3.7 & \textbf{3.933} & 3.894\\
ShareGPT & 1.267 & \textbf{1.333} & 1.244 & 1.236 & 3.366 & 3.63 & \textbf{3.824} & 3.78\\
GSM8K & 1.353 & \textbf{1.445} & 1.344 & 1.319 & 3.856 & 4.179 & \textbf{4.356} & 4.277\\
XSum& 1.328 & \textbf{1.403} & 1.300 & 1.285 & 3.487 & 3.768 & \textbf{3.949} & 3.897 \\
PIQA & 1.305 & \textbf{1.377} & 1.270 & 1.280 & 3.401 & 3.685 & \textbf{3.846} & 3.82 \\
LM1B & 1.274 & \textbf{1.344} & 1.253 & 1.245 & 3.218 & 3.494 & \textbf{3.669} & 3.629\\
WMT-DeEn & 1.222 & \textbf{1.293} & 1.204 & 1.194 & 3.165 & 3.422 & \textbf{3.603} & 3.56 \\
  \bottomrule
\end{tabular}
\end{center}
\label{tab:multi_draft}
\end{table*}

\znew{We observe that while SpecTr and SpecInfer can achieve higher block efficiencies, due to the increased computation to evaluate more candidates, our method achieves better speedup than SpecTr and SpecInfer, demonstrating the advantage of our method in the common practical setting with query batching.}

\subsection{Detailed results with other parameter settings}
In this section, we present experimental results for the same set of experiments described in \cref{sec:experiments} with different block lengths ($\nt = 4, 6, 8$) and different drafters (PALM-2-XXS and PALM-2-XXXS):
\begin{itemize}
    \item \cref{tab:exp_comparison_l4}. Drafter: PALM-2-XXS, $\nt=4$.
    \item \cref{tab:exp_comparison_l6}. Drafter: PALM-2-XXS, $\nt=6$.
    \item \cref{tab:exp_comparison_l4_xxxs}. Drafter: PALM-2-XXXS, $\nt=4$.
    \item \cref{tab:exp_comparison_l6_xxxs}. Drafter: PALM-2-XXXS, $\nt=6$.
    \item \cref{tab:exp_comparison_l8_xxxs}. Drafter: PALM-2-XXXS, $\nt=8$.
\end{itemize}

\begin{table*}[h!]
\vspace{-10pt}
\caption{Speedup comparison between token verification (\textsc{TokenV}) and block verification (\textsc{BlockV}) with $\nt = 4$ and PALM-2-XXS being the draft model. Each statistic is computed using $1000$ test prompts from different datasets on various tasks (each run is an average with 3 different random seeds). Numbers after $\pm$ represent standard deviation.}
 \setlength{\tabcolsep}{2pt}
\begin{center}
\begin{tabular}{  c c c c c c c}
\toprule
\multirow{2}{*}{Dataset} &  \multicolumn{3}{c}{Block efficiency} & \multicolumn{3}{c}{Wall clock time speedup over baseline} \\
\cmidrule(lr){2-4} \cmidrule(lr){5-7} 
& \textsc{TokenV} & \textsc{BlockV}   & Improve. $\uparrow$ \% & \textsc{TokenV} & \textsc{BlockV}  & Improve. $\uparrow$ \% \\ 
 \midrule
LM1B & $2.78\pm 0.01$ & $2.88\pm 0.01$ & $3.48\pm 0.24$ & $2.36\pm 0.00$ & $2.42\pm 0.01$ & $2.51\pm 0.22$ \\
GPT Prompt & $2.88\pm 0.01$ & $3.00\pm 0.00$ & $4.33\pm 0.25$ & $2.43\pm 0.01$ & $2.51\pm 0.00$ & $3.43\pm 0.24$ \\
WebQA & $2.91\pm 0.01$ & $2.99\pm 0.01$ & $2.83\pm 0.65$ & $2.45\pm 0.01$ & $2.50\pm 0.01$ & $1.94\pm 0.61$ \\
PIQA & $2.89\pm 0.00$ & $2.99\pm 0.01$ & $3.48\pm 0.21$ & $2.44\pm 0.00$ & $2.50\pm 0.01$ & $2.66\pm 0.20$ \\
ShareGPT & $2.85\pm 0.01$ & $2.95\pm 0.00$ & $3.48\pm 0.19$ & $2.41\pm 0.01$ & $2.47\pm 0.00$ & $2.63\pm 0.17$ \\
XSum & $2.94\pm 0.01$ & $3.03\pm 0.01$ & $3.24\pm 0.51$ & $2.48\pm 0.01$ & $2.54\pm 0.01$ & $2.35\pm 0.48$ \\
GSM8K & $3.12\pm 0.01$ & $3.21\pm 0.02$ & $3.06\pm 0.95$ & $2.62\pm 0.01$ & $2.68\pm 0.02$ & $2.19\pm 0.89$ \\
WMT-DeEn & $2.75\pm 0.01$ & $2.83\pm 0.01$ & $2.99\pm 0.09$ & $2.33\pm 0.01$ & $2.38\pm 0.01$ & $2.18\pm 0.09$ \\
\midrule
Average & $2.89$&	$2.99$ & $3.36$ &	$2.44$ &	$2.50$ &$2.49$ \\
  \bottomrule
\end{tabular}
\end{center}
\vspace{-10pt}
\label{tab:exp_comparison_l4}
\end{table*}

\begin{table*}[h!]
\caption{Speedup comparison between token verification (\textsc{TokenV}) and block verification (\textsc{BlockV}) with $\nt = 6$ and PALM-2-XXS being the draft model. Each statistic is computed using $1000$ test prompts from different datasets on various tasks (each run is an average with 3 different random seeds). Numbers after $\pm$ represent standard deviation.}
 \setlength{\tabcolsep}{2pt}
\begin{center}
\begin{tabular}{  c c c c c c c}
\toprule
\multirow{2}{*}{Dataset} &  \multicolumn{3}{c}{Block efficiency} & \multicolumn{3}{c}{Wall clock time speedup over baseline} \\
\cmidrule(lr){2-4} \cmidrule(lr){5-7} 
& \textsc{TokenV} & \textsc{BlockV}   & Improve. $\uparrow$ \% & \textsc{TokenV} & \textsc{BlockV}  & Improve. $\uparrow$ \% \\ 
 \midrule
LM1B & $3.08\pm 0.01$ & $3.27\pm 0.01$ & $6.42\pm 0.07$ & $2.32\pm 0.01$ & $2.43\pm 0.01$ & $5.00\pm 0.06$ \\
GPT Prompt & $3.22\pm 0.01$ & $3.44\pm 0.02$ & $6.55\pm 0.83$ & $2.42\pm 0.00$ & $2.54\pm 0.02$ & $5.06\pm 0.77$ \\
WebQA & $3.26\pm 0.01$ & $3.44\pm 0.01$ & $5.60\pm 0.22$ & $2.45\pm 0.01$ & $2.55\pm 0.01$ & $4.24\pm 0.21$ \\
PIQA & $3.22\pm 0.02$ & $3.43\pm 0.02$ & $6.36\pm 0.78$ & $2.42\pm 0.01$ & $2.54\pm 0.01$ & $4.92\pm 0.72$ \\
ShareGPT & $3.18\pm 0.02$ & $3.37\pm 0.01$ & $6.13\pm 0.53$ & $2.39\pm 0.02$ & $2.50\pm 0.01$ & $4.74\pm 0.49$ \\
XSum & $3.29\pm 0.01$ & $3.48\pm 0.01$ & $5.91\pm 0.82$ & $2.47\pm 0.01$ & $2.58\pm 0.01$ & $4.47\pm 0.77$ \\
GSM8K & $3.56\pm 0.01$ & $3.80\pm 0.03$ & $6.86\pm 0.60$ & $2.66\pm 0.01$ & $2.80\pm 0.02$ & $5.38\pm 0.56$ \\
WMT-DeEn & $3.04\pm 0.01$ & $3.19\pm 0.01$ & $4.92\pm 0.29$ & $2.29\pm 0.01$ & $2.37\pm 0.01$ & $3.57\pm 0.27$ \\
\midrule
Average & $3.23$ & $ 	3.43$ & $	6.10$ & $	2.43	$ & $ 2.54	$ & $4.67$ \\
  \bottomrule
\end{tabular}
\end{center}
\label{tab:exp_comparison_l6}
\end{table*}

\begin{table*}[h!]
\caption{Speedup comparison between token verification (\textsc{TokenV}) and block verification (\textsc{BlockV}) with $\nt = 4$ and PALM-2-XXXS being the draft model. Each statistic is computed using $1000$ test prompts from different datasets on various tasks (each run is an average with 3 different random seeds). Numbers after $\pm$ represent standard deviation.}
 \setlength{\tabcolsep}{2pt}
\begin{center}
\begin{tabular}{  c c c c c c c}
\toprule
\multirow{2}{*}{Dataset} &  \multicolumn{3}{c}{Block efficiency} & \multicolumn{3}{c}{Wall clock time speedup over baseline} \\
\cmidrule(lr){2-4} \cmidrule(lr){5-7} 
& \textsc{TokenV} & \textsc{BlockV}   & Improve. $\uparrow$ \% & \textsc{TokenV} & \textsc{BlockV}  & Improve. $\uparrow$ \% \\ 
 \midrule
LM1B & $2.24\pm 0.00$ & $2.33\pm 0.01$ & $4.23\pm 0.44$ & $2.25\pm 0.00$ & $2.34\pm 0.01$ & $3.89\pm 0.41$ \\
GPT Prompt & $2.41\pm 0.02$ & $2.48\pm 0.01$ & $2.96\pm 1.00$ & $2.42\pm 0.02$ & $2.48\pm 0.01$ & $2.72\pm 0.94$ \\
WebQA & $2.38\pm 0.01$ & $2.45\pm 0.01$ & $2.87\pm 0.13$ & $2.39\pm 0.01$ & $2.45\pm 0.01$ & $2.63\pm 0.12$ \\
PIQA & $2.36\pm 0.01$ & $2.43\pm 0.01$ & $3.22\pm 0.37$ & $2.37\pm 0.01$ & $2.44\pm 0.01$ & $2.97\pm 0.35$ \\
ShareGPT & $2.34\pm 0.00$ & $2.42\pm 0.01$ & $3.49\pm 0.12$ & $2.35\pm 0.00$ & $2.42\pm 0.01$ & $3.16\pm 0.12$ \\
XSum & $2.38\pm 0.01$ & $2.45\pm 0.01$ & $2.91\pm 0.63$ & $2.39\pm 0.01$ & $2.45\pm 0.01$ & $2.68\pm 0.60$ \\
GSM8K & $2.51\pm 0.01$ & $2.58\pm 0.02$ & $2.99\pm 0.47$ & $2.51\pm 0.01$ & $2.58\pm 0.02$ & $2.74\pm 0.44$ \\
WMT-DeEn & $2.22\pm 0.00$ & $2.28\pm 0.00$ & $2.59\pm 0.09$ & $2.24\pm 0.00$ & $2.29\pm 0.00$ & $2.37\pm 0.08$ \\
\midrule
Average & $2.35$ & $	2.43$ & $	3.16$ & $	2.36 $ & $	2.43$ & $	2.89$ \\
  \bottomrule
\end{tabular}
\end{center}
\label{tab:exp_comparison_l4_xxxs}
\end{table*}

\begin{table*}[h!]
\caption{Speedup comparison between token verification (\textsc{TokenV}) and block verification (\textsc{BlockV}) with $\nt = 6$ and PALM-2-XXXS being the draft model. Each statistic is computed using $1000$ test prompts from different datasets on various tasks (each run is an average with 3 different random seeds). Numbers after $\pm$ represent standard deviation.}
 \setlength{\tabcolsep}{2pt}
\begin{center}
\begin{tabular}{  c c c c c c c}
\toprule
\multirow{2}{*}{Dataset} &  \multicolumn{3}{c}{Block efficiency} & \multicolumn{3}{c}{Wall clock time speedup over baseline} \\
\cmidrule(lr){2-4} \cmidrule(lr){5-7} 
& \textsc{TokenV} & \textsc{BlockV}   & Improve. $\uparrow$ \% & \textsc{TokenV} & \textsc{BlockV}  & Improve. $\uparrow$ \% \\ 
 \midrule
LM1B & $2.36\pm 0.01$ & $2.48\pm 0.00$ & $4.93\pm 0.46$ & $2.27\pm 0.01$ & $2.37\pm 0.00$ & $4.55\pm 0.43$ \\
GPT Prompt & $2.58\pm 0.04$ & $2.72\pm 0.02$ & $5.57\pm 1.29$ & $2.46\pm 0.03$ & $2.59\pm 0.01$ & $5.10\pm 1.22$ \\
WebQA & $2.54\pm 0.00$ & $2.68\pm 0.02$ & $5.46\pm 0.50$ & $2.43\pm 0.00$ & $2.55\pm 0.01$ & $5.02\pm 0.47$ \\
PIQA & $2.50\pm 0.00$ & $2.62\pm 0.01$ & $5.06\pm 0.39$ & $2.39\pm 0.00$ & $2.50\pm 0.01$ & $4.66\pm 0.37$ \\
ShareGPT & $2.47\pm 0.01$ & $2.60\pm 0.01$ & $5.10\pm 0.49$ & $2.37\pm 0.01$ & $2.48\pm 0.01$ & $4.69\pm 0.46$ \\
XSum & $2.54\pm 0.01$ & $2.67\pm 0.01$ & $4.83\pm 0.47$ & $2.43\pm 0.01$ & $2.54\pm 0.01$ & $4.45\pm 0.44$ \\
GSM8K & $2.71\pm 0.03$ & $2.83\pm 0.00$ & $4.27\pm 0.89$ & $2.58\pm 0.02$ & $2.69\pm 0.00$ & $3.92\pm 0.84$ \\
WMT-DeEn & $2.31\pm 0.01$ & $2.43\pm 0.02$ & $5.38\pm 0.57$ & $2.21\pm 0.00$ & $2.32\pm 0.01$ & $4.99\pm 0.54$ \\
\midrule
Average & $2.50	$ & $2.63$ & $	5.07$ & $	2.39$ & $	2.50$ & $	4.67$ \\
  \bottomrule
\end{tabular}
\end{center}
\label{tab:exp_comparison_l6_xxxs}
\end{table*}

\begin{table*}[h!]
\caption{Speedup comparison between token verification (\textsc{TokenV}) and block verification (\textsc{BlockV}) with $\nt = 8$ and PALM-2-XXXS being the draft model. Each statistic is computed using $1000$ test prompts from different datasets on various tasks (each run is an average with 3 different random seeds). Numbers after $\pm$ represent standard deviation.}
 \setlength{\tabcolsep}{2pt}
\begin{center}
\begin{tabular}{  c c c c c c c}
\toprule
\multirow{2}{*}{Dataset} &  \multicolumn{3}{c}{Block efficiency} & \multicolumn{3}{c}{Wall clock time speedup over baseline} \\
\cmidrule(lr){2-4} \cmidrule(lr){5-7} 
& \textsc{TokenV} & \textsc{BlockV}   & Improve. $\uparrow$ \% & \textsc{TokenV} & \textsc{BlockV}  & Improve. $\uparrow$ \% \\ 
 \midrule
LM1B & $2.40\pm 0.01$ & $2.55\pm 0.01$ & $6.19\pm 0.43$ & $2.13\pm 0.01$ & $2.25\pm 0.01$ & $5.28\pm 0.40$ \\
GPT Prompt & $2.66\pm 0.01$ & $2.82\pm 0.02$ & $6.28\pm 1.01$ & $2.35\pm 0.01$ & $2.47\pm 0.02$ & $5.37\pm 0.95$ \\
WebQA & $2.61\pm 0.01$ & $2.78\pm 0.00$ & $6.27\pm 0.49$ & $2.31\pm 0.01$ & $2.43\pm 0.00$ & $5.39\pm 0.46$ \\
PIQA & $2.57\pm 0.01$ & $2.76\pm 0.01$ & $7.48\pm 0.51$ & $2.27\pm 0.01$ & $2.42\pm 0.01$ & $6.51\pm 0.47$ \\
ShareGPT & $2.54\pm 0.01$ & $2.71\pm 0.01$ & $6.63\pm 0.72$ & $2.25\pm 0.01$ & $2.38\pm 0.01$ & $5.68\pm 0.68$ \\
XSum & $2.60\pm 0.01$ & $2.77\pm 0.00$ & $6.46\pm 0.49$ & $2.30\pm 0.01$ & $2.43\pm 0.00$ & $5.53\pm 0.46$ \\
GSM8K & $2.82\pm 0.02$ & $2.98\pm 0.03$ & $5.48\pm 1.18$ & $2.49\pm 0.01$ & $2.60\pm 0.03$ & $4.62\pm 1.11$ \\
WMT-DeEn & $2.37\pm 0.00$ & $2.49\pm 0.01$ & $5.33\pm 0.46$ & $2.10\pm 0.00$ & $2.20\pm 0.01$ & $4.53\pm 0.43$ \\
\midrule
Average & $2.57	$ & $2.73$ & $	6.27$ & $	2.28$ & $	2.40$ & $	5.36$ \\
  \bottomrule
\end{tabular}
\end{center}
\label{tab:exp_comparison_l8_xxxs}
\end{table*}

\end{document}